\documentclass{article}

\PassOptionsToPackage{round,authoryear}{natbib}

\usepackage[preprint]{neurips_2026}

\usepackage[utf8]{inputenc}
\usepackage[T1]{fontenc}
\usepackage{hyperref}
\usepackage{url}
\usepackage{booktabs}
\usepackage{amsfonts}
\usepackage{nicefrac}
\usepackage{microtype}
\usepackage{xcolor}
\usepackage{algorithm}
\usepackage{algorithmic}

\usepackage[british]{babel}
\usepackage{enumitem}
\usepackage{float}
\usepackage{dblfloatfix}
\usepackage{graphicx}
\usepackage{amsmath,amssymb,mathtools,bm,etoolbox,amsthm}
\usepackage{caption}
\usepackage{subcaption}
\usepackage{wrapfig}
\usepackage[title]{appendix}
\usepackage{titletoc}
\usepackage{placeins}
\usepackage{commath}
\usepackage{xspace}
\usepackage{listings}
\usepackage{tikz}
\usetikzlibrary{fit,positioning}
\usepackage{pgfplots}
\pgfplotsset{compat=1.18}

\graphicspath{{Figures/}{Figures/frame/}}

\definecolor{blcolor}{rgb}{0,0,0.65}
\hypersetup{
  colorlinks,
  linkcolor={blcolor},
  citecolor={blcolor},
  urlcolor={blcolor}
}

\newcommand{\remove}[1]{}

\newcommand{\bfv}[1]{\bm{#1}}

\newcommand{\expect}[1]{\mathbb{E}\!\left[#1\right]}
\newcommand{\expectw}[2]{\mathbb{E}_{#1}\left[#2\right]}

\newcommand{\defeq}{\coloneqq}

\newcommand{\fig}[1]{Fig.~\ref{#1}}

\newcommand{\sect}[1]{Sec.~\ref{#1}}

\newcommand{\app}[1]{App.~\ref{#1}}
\newcommand{\eps}{\varepsilon}
\newcommand{\parag}[1]{{\bf #1}~~}

\theoremstyle{plain}
\newtheorem{theorem}{Theorem}[section]
\newtheorem{proposition}[theorem]{Proposition}
\newtheorem{lemma}[theorem]{Lemma}

\theoremstyle{definition}

\theoremstyle{remark}

\newcommand{\maxpo}{\ensuremath{\mathrm{MaxPO}}}

\usepackage[most]{tcolorbox}

\definecolor{codegreen}{rgb}{0,0.6,0}
\definecolor{codegray}{rgb}{0.5,0.5,0.5}
\definecolor{codepurple}{rgb}{0.58,0,0.82}
\definecolor{backcolour}{rgb}{0.95,0.95,0.92}

\definecolor{softred}{RGB}{220, 80, 80}
\definecolor{softgreen}{RGB}{60, 150, 80}
\definecolor{softorange}{RGB}{230, 140, 60}
\definecolor{softblue}{RGB}{70, 130, 200}
\definecolor{boxgray}{RGB}{95,95,95}

\lstdefinestyle{mystyle}{
    backgroundcolor=\color{backcolour},
    commentstyle=\color{codegreen},
    keywordstyle=\color{magenta},
    numberstyle=\tiny\color{codegray},
    stringstyle=\color{codepurple},
    basicstyle=\ttfamily\footnotesize,     breakatwhitespace=false,
    breaklines=true,
    captionpos=b,
    keepspaces=true,
    numbers=left,
    numbersep=5pt,
    showspaces=false,
    showstringspaces=false,
    showtabs=false,
    tabsize=4,
    language=Python
}

\newtcblisting{examplebox}[2][]{  breakable,
  colback=white,
  colframe=gray!75!black,
  title=#2,
  listing only,
  listing options={
    basicstyle=\normalfont\small,
    columns=fullflexible,
    keepspaces=true,
    showstringspaces=false,
    breaklines=true,
    language={},
    keywordstyle=,
    commentstyle=,
    stringstyle=,
    backgroundcolor=\color{white},
    frame=none,
    rulecolor=\color{white},
    framerule=0pt,
    framesep=0pt,
    xleftmargin=0pt,
    xrightmargin=0pt,
    aboveskip=0pt,
    belowskip=0pt
  },
  #1
}

\setcitestyle{square,numbers}

\title{OrderGrad: Optimizing Beyond the Mean with Order-Statistic Policy Gradient Estimation}

\author{\begin{tabular}{@{}c@{}}
Paavo Parmas \quad Yongmin Kim \quad Kohsei Matsutani \quad Shota Takashiro\\[0.3em]
Soichiro Nishimori \quad Takeshi Kojima \quad Yusuke Iwasawa \quad Yutaka Matsuo\\[0.7em]
\normalfont{The University of Tokyo}\\
\texttt{paavo.parmas@weblab.t.u-tokyo.ac.jp}
\end{tabular}}

\newcommand{\R}{\mathbb{R}}
\newcommand{\E}{\mathbb{E}}
\newcommand{\Prob}{\mathbb{P}}
\newcommand{\ind}{\mathbf{1}}

\newcommand{\estmr}[1]{\hat{\bfv{#1}}}
\newcommand{\gtrue}{\bfv{g}}
\newcommand{\Jmean}{J_{\mathrm{mean}}}

\begin{document}

\maketitle

\begin{abstract}
Policy-gradient methods usually optimize expected return, but many real world applications care about distributional properties of returns: tail risk, outlier robustness, or best-of-K discovery. We introduce OrderGrad, a family of likelihood-ratio and reparameterization gradient estimators for order-statistic objectives. OrderGrad optimizes finite-sample L-statistics, i.e., weighted averages of sorted rewards or costs, recovering objectives such as VaR, CVaR, trimmed means, medians, and top-m/best-of-K criteria by changing only the rank weights. For any fixed sample size and rank-weight vector, OrderGrad provides an unbiased gradient estimator for the corresponding order-statistic objective. The method is implemented as a simple reward transformation that can then be used in an otherwise standard policy-gradient or reparameterized update. We study the resulting estimator’s variance behavior and evaluate it on tasks where mean optimization is mismatched to the deployment objective, including LLM math post-training and other tasks. OrderGrad provides a unified, plug-and-play route to risk-averse, robust, and exploratory learning.\\[0.8ex]
Code: \url{https://github.com/paavo5/ordergrad}
\end{abstract}

\section{Introduction}
\label{sec:introduction}

Most stochastic learning algorithms optimize using a gradient estimator $\estmr{g}$ that is unbiased for the mean objective
$\expect{\estmr{g}} = \nabla_\theta \expectw{x\sim p_\theta}{R(x)}$.  In policy-gradient methods, likelihood-ratio (LR) estimators differentiate log probabilities and weight them by scalar returns or advantages \citep{williams_1992_reinforce}; in reparameterized (RP) models, pathwise estimators differentiate samples written as transformations of parameter-free noise \citep{kingma_welling_2014_autoencoding,rezende_2014_stochastic}.
In practice, likelihood-ratio policy gradients have become a workhorse in high-impact applications, including LLM post-training with human or rule-based feedback \citep{ziegler_2019_finetuning,stiennon2020learning,ouyang_2022_training,ahmadian_2024_back,shao2024deepseekmath,guo_2025_deepseekr1} and robot policy learning for motor control and dexterous manipulation \citep{peters_2006_policy,peters_2008_reinforcement,andrychowicz_2020_dexterous,akkaya_2019_rubiks}.
Both viewpoints are now standard in Monte Carlo gradient estimation \citep{mohamed_2020_mc_gradient,parmas_2021_unified}.  OrderGrad asks how these familiar estimators should change when the objective is not the mean reward but a functional of the whole reward distribution (applicable to both LR and RP as well as regular minibatch gradients).

\parag{Why go beyond the mean?}
The usual criterion $\Jmean(\theta)=\E[R]$ compresses the reward distribution to a single number.  In robust learning, risk-sensitive RL, evaluation of stochastic agents, and preference optimization, this mean can be the wrong target.
An inference-time generation system may sample several completions and select one using a reward model \citep{stiennon2020learning,nakano2021webgpt}; a safety-critical controller may care about lower-tail or percentile risk \citep{chow2015risk,chow2018risk}; a robust learner may trim high-loss corrupted samples \citep{shen2019learning,lugosi2021robust}; and an alignment or preference-learning pipeline may target specific reward quantiles or top-$M$ rankings rather than the mean \citep{wang2025beyond,cai2025k}.
Order statistics are a standard way to describe such distributional goals: they expose lower tails, upper tails, medians, trimmed means, and best-of-$k$ behavior using only sorted outcomes.  More generally, weighted averages of order statistics, or L-statistics, provide a classical and flexible language for robust and distribution-aware objectives \citep{daniell_1920_order,bickel_lehmann_1975_location,huber_ronchetti_2009_robust,maurer_2020_lstat}.

\begin{figure}[t]
    \centering
    \includegraphics[width=1.0\textwidth]{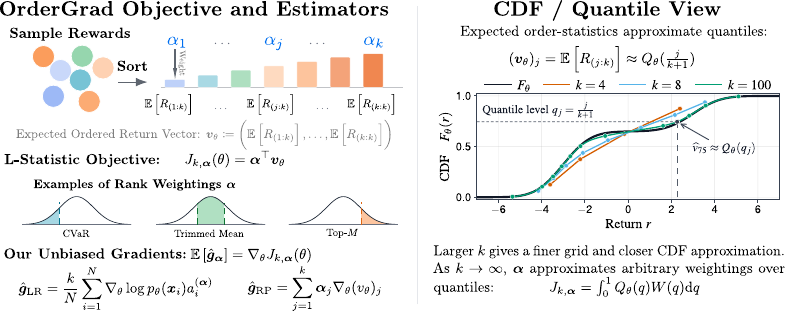}
                \caption{OrderGrad overview.  Rank weights define a distributional objective over sorted rewards.  For likelihood-ratio gradients, OrderGrad transforms rewards into rank advantages $\{R_i\}\mapsto\{a_i^{(\alpha)}\}$; for reparameterization gradients, it differentiates the rank-weighted value $v^\alpha$.  The right side shows the equivalent quantile view of the finite-$k$ objective, shown for a bimodal reward distribution $p(r)$.}
    \label{fig:framework_overview}
\end{figure}

\parag{Order statistics as quantile weighting.}
OrderGrad starts from a simple idea: instead of summarizing a reward distribution only by its mean, we sort a small group of sampled rewards and decide which ranks matter.  Given $k$ samples, write their sorted rewards as
\[
R_{(1:k)} \le \cdots \le R_{(k:k)}.
\]
The lowest ranks describe the lower tail, the middle ranks describe typical outcomes, and the highest ranks describe best-of-$k$ behavior.  As $k$ grows, these sorted rewards give an increasingly fine approximation to the reward distribution: the $j$th order statistic corresponds roughly to the $j/(k+1)$ quantile, and in the limit this view recovers the underlying CDF/quantile curve (\fig{fig:framework_overview}, right).

Thus, choosing rank weights $\alpha_1,\ldots,\alpha_k$ is a way of choosing which parts of the reward distribution to optimize.  Placing weight on large ranks emphasizes high-quantile or best-of-$k$ performance; placing weight on small ranks emphasizes robustness or safety in the lower tail; spreading weight over a range gives trimmed or tail-averaged objectives. The resulting objective is
\begin{equation}
\label{eq:order-obj}
J_{k,\alpha}(\theta)
\defeq
\E\!\left[\sum_{j=1}^k \alpha_j R_{(j:k)}\right] = 
\sum_{j=1}^k \alpha_j\E\!\left[ R_{(j:k)}\right].
\end{equation}

\paragraph{OrderGrad as a simple transformation.}
OrderGrad derives unbiased estimators for \eqref{eq:order-obj} by changing only the scalar learning signal used by familiar gradient estimators.  In likelihood-ratio form, the minibatch rewards are transformed into rank advantages, with one line of code,
\[
\{R_i\}_{i=1}^N \longmapsto \{a_i^{(\alpha)}\}_{i=1}^N, \quad\mathtt{ ~a = ordergrad.lstat\_advantage(rewards, ~alpha)}
\]
which are then plugged into an otherwise standard policy-gradient update. The reparameterization
 form has a similar transform. The explicit LR and RP estimator equations are summarized in \fig{fig:framework_overview} and written in 
 \eqref{eq:og-batch-estimator}; the appendix gives the unbiasedness derivations.  The order-statistic size $k$ defines the target criterion and can be smaller than the batch size $N$. For fixed $N$, $k$, and $\alpha$, the required rank weights can be precomputed and collapsed once, so each minibatch update is sorting plus a linear pass, with total time $O(N\log N)$ dominated by the sort (i.e., negligible in practice).

From the technical perspective, the closest works are the recent line of approaches for unbiased pass@k/max@k optimization 
\cite{koyamada2022remax,nishimori2026emergence,tang2025optimizing,walder_karkhanis_2025_pkpo,chen2025pass,bagirov2025best}, which are a special
case of order-statistics (specifically, they consider only the top rank).
In this line of work, the key question has been how to reduce gradient estimation variance. \citet{koyamada2022remax} initially gave a rudimentary 
LR based estimation method with a baseline computed from separate batches; \citet{tang2025optimizing} proposed a better baseline that uses the max
of the other $k-1$ samples; \citet{walder_karkhanis_2025_pkpo} extended this approach to cases when $N>k$ to achieve better variance reduction; and 
\citet{chen2025pass} and \citet{bagirov2025best} give yet other baselines based on mean estimates for pass@k and max@k respectively.
Our work could be seen as a generalization of these methods to arbitrary order-statistics beyond only the top-1 rank. We give a different derivation
for the baseline computation that allows generalization to our scenario, but matches \citet{chen2025pass} and \citet{bagirov2025best} in the max@k
setting.

Experimentally, we evaluate OrderGrad on LLM math post-training tasks in \sect{sec:reasoning_experiments}.  We use Top-$M$@$K$ objectives, which average the top $M$ members of each $K$-sample group, and show improved pass@$k$ compared with the Max@$K$ special case.  We also combine a Top-$M$@$K$ solve reward with a Bottom-$M$@$K$ length-penalty cost.  This targeted rank weighting significantly shortens outputs without substantially reducing solve rate, whereas a naive GRPO scalarization that simply sums the solve and length rewards gives poor performance.

\section{Preliminaries}
\label{sec:preliminaries}

\subsection{Notation and Monte Carlo gradient estimators}
\label{sec:notation-gradient-estimators}

Let $\theta\in\R^d$ denote the parameters of a sampler, policy, model, or stochastic computation graph.  A random object $x\sim p_\theta$ may be a trajectory in RL, a completion from a language model, an action in a bandit, or a latent variable in a differentiable model.  A scalar reward is denoted by $R(x)\in\R$; larger values are better.  For loss minimization, we set $R=-L$.

For a generic scalar learning criterion $J(\theta)$, write
\begin{equation}
\gtrue(\theta)\defeq \nabla_\theta J(\theta).
\end{equation}
Following common Monte Carlo gradient-estimation notation, an estimator $\estmr{g}$ is a random vector.  It is unbiased for $J$ when
\begin{equation}
\label{eq:unbiased-gradient-estimator}
\E[\estmr{g}]=\gtrue(\theta)=\nabla_\theta J(\theta).
\end{equation}
With $N$ independent samples, the usual batch estimator is $\estmr{g}_N=N^{-1}\sum_{i=1}^N\estmr{g}_i$.

\subsection{The mean objective and elementary estimators}
\label{sec:elementary-estimators}

The standard expected-reward objective is
\begin{equation}
\label{eq:mean-objective}
\Jmean(\theta)=\E_{x\sim p_\theta}[R(x)].
\end{equation}
The likelihood-ratio, or score-function, estimator is
\begin{equation}
\label{eq:lr-prelim}
\estmr{g}_{\mathrm{LR}}
=
\nabla_\theta\log p_\theta(x)\bigl(R(x)-b\bigr),
\qquad
\E[\estmr{g}_{\mathrm{LR}}]=\nabla_\theta\Jmean(\theta),
\end{equation}
where $b$ is a baseline independent of the current sample $x$; more generally, $b$ may depend on the state, prompt, context, or other samples, but not on the sampled action or trajectory whose score is being multiplied.  Under this condition $\E[\nabla_\theta\log p_\theta(x)b]=0$, so the estimator remains unbiased.  This includes population baselines and leave-one-out baselines that exclude the current sample, as in multi-sample Monte Carlo estimators \citep{mnih_rezende_2016_vimco,parmas_2018_pipps}.

When the sample can be written as a differentiable transformation $x=T_\theta(\eps)$ with $\eps\sim p(\eps)$ independent of $\theta$, the reparameterization/pathwise estimator is
\begin{equation}
\label{eq:rp-prelim}
\estmr{g}_{\mathrm{RP}}
=
\nabla_\theta R\bigl(T_\theta(\eps)\bigr)
=
\nabla_x R(x)\,\nabla_\theta T_\theta(\eps),
\end{equation}
assuming $R$ has no additional direct dependence on $\theta$.  LR estimators require only rewards and log-probability gradients, while RP estimators can have lower variance but require a differentiable sampling path.  OrderGrad can be layered on top of either estimator family: in LR form it replaces $R-b$ by a rank-based batch advantage; in RP form it differentiates the batch value $v^\alpha$.

\subsection{Order statistics and L-statistics}
\label{sec:order-stat-prelim}

To move beyond the mean objective, OrderGrad draws $k$ rewards, sorts them as $R_{(1:k)}\le\cdots\le R_{(k:k)}$, and optimizes the rank-weighted criterion $J_{k,\alpha}(\theta)=\E\!\left[\sum_{j=1}^k\alpha_j R_{(j:k)}\right]$.  An L-statistic is a linear combination of order statistics.  In this paper, the order-statistic sample size $k$ is a design choice that determines which distributional property the gradient estimator targets.  The weight vector $\alpha$ can be normalized to sum to one, but this is not required: unnormalized weights are useful when scaling gradients or combining multiple criteria.

A simple finite-population computation will be used repeatedly below.  Suppose a fixed batch has sorted rewards
\[
R_{(1:N)}\le \cdots \le R_{(N:N)}
\]
and let $S$ be a uniformly random size-$k$ subset of the $N$ indices.  The probability that the $m$th sorted batch reward becomes the $j$th order statistic of the subset is
\begin{equation}
\label{eq:prelim-finite-pop-weight}
\omega_{m,j}^{(N,k)}
=
\Prob\!\left((R_S)_{(j:k)}=R_{(m:N)}\mid R_{1:N}\right)
=
\frac{\binom{m-1}{j-1}\binom{N-m}{k-j}}{\binom{N}{k}},
\end{equation}
with the convention that out-of-range binomial coefficients are zero.  This is the standard order-statistic distribution for a simple random sample without replacement from a finite population \citep[Sec.~3.7]{arnold_balakrishnan_nagaraja_1992_first_course}; see also \citet{oneill_2025_srswor_order_statistics}.  Consequently the batch value of the $j$th order statistic can be computed as the fixed weighted sum
\begin{equation}
\label{eq:prelim-vj-fixed-weights}
v_j
=
\E\!\left[(R_S)_{(j:k)}\mid R_{1:N}\right]
=
\sum_{m=1}^N R_{(m:N)}\,\omega_{m,j}^{(N,k)}.
\end{equation}
This counting idea is the basis for the OrderGrad estimator; below we extend it to obtain the include-one value and a principled leave-one-out baseline for the LR gradient estimator.

\subsection{Connection to CDFs, quantiles, and spectral criteria}
\label{sec:cdf-quantile-connection}

For a continuous reward distribution with CDF $F$ and quantile function $Q(u)=F^{-1}(u)$, the probability integral transform lets us sample rewards by first drawing $U_i\sim\operatorname{Unif}(0,1)$ and then setting $R_i\defeq Q(U_i)$.  Since $Q$ is nondecreasing, ranking can be carried out in the uniform sample space and then mapped back through $Q$:
\[
U_{(1:k)}\le\cdots\le U_{(k:k)},
\qquad
R_{(j:k)}\defeq Q(U_{(j:k)}).
\]
The $j$th order statistic of $k$ independent uniforms has distribution $U_{(j:k)}\sim \operatorname{Beta}(j,k+1-j)$ \citep{arnold_balakrishnan_nagaraja_1992_first_course}; write its density as $b_{j,k}$.  This Beta density is a smooth kernel on $[0,1]$ with mean $j/(k+1)$.  Therefore
\begin{equation}
\label{eq:beta-kernel-objective}
\E[R_{(j:k)}]
=\int_0^1 Q(u) b_{j,k}(u)\,du,
\qquad
J_{k,\alpha}(\theta)
=\int_0^1 Q_\theta(u) w_{k,\alpha}(u)\,du,
\end{equation}
where $w_{k,\alpha}(u)=\sum_{j=1}^k \alpha_j b_{j,k}(u)$.  Thus finite-$k$ order objectives are beta-kernel-smoothed quantile-weighted objectives, with $\alpha$ mixing Beta kernels over quantile levels.  As $k$ grows, $J_{k,\alpha}$ approaches arbitrary quantile-weighted criteria $\int_0^1 Q_\theta(u)W(u)\,du$, including CVaR/AVaR and spectral risk measures \citep{acerbi_tasche_2002_expected,acerbi_tasche_2002_coherence,shapiro_2013_kusuoka,holland_haress_2022_spectral}.

\section{OrderGrad method: Unbiased Order-statistic Gradient Estimation}
\label{sec:method}

\subsection{From population advantages to batch advantages}
\label{sec:method-two-regimes}

We first derive the likelihood-ratio form in the ideal population setting.  Draw
\[
A_1,\ldots,A_k\stackrel{i.i.d.}{\sim}p_\theta,
\qquad
R_\ell=R(A_\ell),
\qquad
R_{(1:k)}\le\cdots\le R_{(k:k)}.
\]
For a single rank $j$, define the population order-statistic value and the conditional value of fixing the first draw to action $b$:
\[
\bar v_j(\theta)\defeq \E[R_{(j:k)}],
\qquad
\bar q_{b,j}\defeq \E[R_{(j:k)}\mid A_1=b].
\]
Analogously to \citet{nishimori2026emergence}, applying the REINFORCE identity to the $k$ i.i.d. draws and using exchangeability gives
\begin{align}
\label{eq:method-pop-rank-lr}
\nabla_\theta \bar v_j(\theta)
&=
\E\!\left[R_{(j:k)}\sum_{\ell=1}^k\nabla_\theta\log p_\theta(A_\ell)\right]
=
k\,\E\!\left[\nabla_\theta\log p_\theta(A_1)R_{(j:k)}\right]\nonumber\\
&=
k\,\E_{b\sim p_\theta}\!\left[\nabla_\theta\log p_\theta(b)\,\E[R_{(j:k)}\mid A_1=b]\right]
=
k\,\E_{b\sim p_\theta}\!\bigl[\nabla_\theta\log p_\theta(b)\,\bar q_{b,j}\bigr].
\end{align}
The unconditional value $\bar v_j=\E_{b\sim p_\theta}[\bar q_{b,j}]$ is a natural baseline for the conditional value.  In the sampling estimator below, we use this idea through a leave-one-out baseline that does not require access to $\bar v_j$.  For L-statistic weights $\alpha\in\R^k$, define
\[
\bar v^\alpha\defeq\sum_{j=1}^k\alpha_j\bar v_j,
\qquad
\bar q_b^{(\alpha)}\defeq\sum_{j=1}^k\alpha_j\bar q_{b,j}.
\]

When the distribution and reward table are known, the barred values and gradients can be computed analytically; the binomial-tail formulas are given in \app{app:known-distribution-details}.  In ordinary minibatch optimization, however, we do not have access to $\bar q_{b,j}$ or $\bar v_j$.  The rest of the main method therefore constructs sampling-based analogues from a realized batch: the value $v_j$, the include-one value $q_{i,j}$, the leave-one-out baseline $v_j^{(-i)}$, and the advantage $a_{i,j}=q_{i,j}-v_j^{(-i)}$.

The next subsections derive these batch quantities and then plug them into the two unbiased sample-based estimators
\begin{equation}
\label{eq:og-batch-estimator}
\estmr{g}_{\mathrm{LR\mbox{-}OG}}^\alpha
=
\frac{k}{N}\sum_{i=1}^N a_i^{(\alpha)}\,\nabla_\theta\log p_\theta(x_i),
\qquad
\estmr{g}_{\mathrm{RP\mbox{-}OG}}^\alpha
=
\nabla_\theta v^\alpha(\theta)
=
\nabla_\theta\sum_{j=1}^k\alpha_j v_j(\theta).
\end{equation}
The LR estimator uses the leave-one-out-centered batch advantage.  A standard interchange of differentiation and expectation gives the RP route: estimate the rank-weighted value by $v^\alpha$ on the batch and differentiate it directly.

\subsection{Batch values, include-one values, and advantages}
\label{sec:method-qva-high-level}

This subsection defines the sample-based estimates used in \eqref{eq:og-batch-estimator}. Fix a realized batch of rewards $R_{1:N}$.  For the value estimator we require $1\le k\le N$; for the LR leave-one-out advantage below we require $1\le k<N$, so that a size-$k$ subset can still be drawn after removing one datapoint.  For any size-$k$ subset $S\subseteq[N]$, let $(R_S)_{(j:k)}$ denote the $j$th smallest reward in that subset.  The batch value for rank $j$ is the average over all such subsets,
\begin{equation}
\label{eq:method-vj-def}
v_j
\defeq
\frac{1}{\binom{N}{k}}
\sum_{\substack{S\subseteq[N]\\ |S|=k}}
(R_S)_{(j:k)}.
\end{equation}
For datapoint $i$, the include-one value averages over all size-$k$ subsets that contain $i$,
\begin{equation}
\label{eq:method-qij-def}
q_{i,j}
\defeq
\frac{1}{\binom{N-1}{k-1}}
\sum_{\substack{S\subseteq[N]\\ |S|=k,\ i\in S}}
(R_S)_{(j:k)},
\end{equation}
and the leave-one-out value averages over all size-$k$ subsets after removing $i$,
\begin{equation}
\label{eq:method-vloo-def}
v^{(-i)}_j
\defeq
\frac{1}{\binom{N-1}{k}}
\sum_{\substack{S\subseteq[N]\setminus\{i\}\\ |S|=k}}
(R_S)_{(j:k)}.
\end{equation}
The batch advantage is
\begin{equation}
\label{eq:method-aij-def}
a_{i,j}
\defeq
q_{i,j}-v^{(-i)}_j,
\qquad
a_i^{(\alpha)}
\defeq
\sum_{j=1}^k \alpha_j a_{i,j}.
\end{equation}
These subset averages are the unbarred analogues of the barred population quantities.  For an i.i.d. batch, they preserve unbiasedness by iterated expectation.  The leave-one-out baseline excludes sample $i$, so it is independent of the score term, matching the condition in \sect{sec:elementary-estimators}.

\subsection{Explicit batch equations and fast computation}
\label{sec:method-batch-equations}

This subsection gives efficient computation for the sample-based quantities defined above.  All batch quantities can be computed after a single sort.  Let
\[
R_{(1:N)}\le\cdots\le R_{(N:N)}
\]
be the sorted rewards, and map the resulting quantities back to the original datapoint indices by the inverse sorting permutation.

For the U-statistic value, the probability that sorted reward $R_{(m:N)}$ is the $j$th order statistic of a uniform size-$k$ subset is
\begin{equation}
\label{eq:method-W-batch}
W_{m,j}
=
\frac{\binom{m-1}{j-1}\binom{N-m}{k-j}}{\binom{N}{k}},
\qquad
m=1,\ldots,N,\quad j=1,\ldots,k.
\end{equation}
Therefore
\begin{equation}
\label{eq:method-v-sorted}
v_j=\sum_{m=1}^N R_{(m:N)}W_{m,j},
\qquad
v^\alpha=\sum_{m=1}^N R_{(m:N)}w_m^{(\alpha)},
\qquad
w^{(\alpha)}=W\alpha.
\end{equation}

For include-one values, let $t$ denote the sorted rank of the datapoint forced into the subset.  Define precomputable weight tables
\begin{equation}
\label{eq:method-ABC}
\begin{alignedat}{3}
A_{m,j}&\defeq
\frac{\binom{m-1}{j-1}\binom{N-m-1}{k-j-1}}{\binom{N-1}{k-1}},\quad&
B_{m,j}&\defeq
\frac{\binom{m-1}{j-1}\binom{N-m}{k-j}}{\binom{N-1}{k-1}},\quad&
C_{m,j}&\defeq
\frac{\binom{m-2}{j-2}\binom{N-m}{k-j}}{\binom{N-1}{k-1}}.
\end{alignedat}
\end{equation}
Then
\begin{equation}
\label{eq:method-q-sorted}
q_{t,j}
=
\sum_{m<t} R_{(m:N)}A_{m,j}
+
R_{(t:N)}B_{t,j}
+
\sum_{m>t} R_{(m:N)}C_{m,j}.
\end{equation}
For leave-one-out values, let
\begin{equation}
\label{eq:method-Wminus}
W^-_{\ell,j}
\defeq
\frac{\binom{\ell-1}{j-1}\binom{(N-1)-\ell}{k-j}}{\binom{N-1}{k}},
\qquad
\ell=1,\ldots,N-1.
\end{equation}
After removing sorted rank $t$,
\begin{equation}
\label{eq:method-vloo-sorted}
v^{(-t)}_j
=
\sum_{m<t} R_{(m:N)}W^-_{m,j}
+
\sum_{m>t} R_{(m:N)}W^-_{m-1,j}.
\end{equation}
Finally,
\begin{equation}
\label{eq:method-a-sorted}
a_{t,j}=q_{t,j}-v^{(-t)}_j,
\qquad
a_t^{(\alpha)}=\sum_{j=1}^k\alpha_j a_{t,j}.
\end{equation}
The appendix derives \eqref{eq:method-W-batch}--\eqref{eq:method-vloo-sorted} by counting subsets and gives prefix/suffix formulas.  Rankwise computation costs $O(Nk)$ after sorting.  For fixed $\alpha$, collapsed weights give all $a_t^{(\alpha)}$ in $O(N)$ after sorting, so the total minibatch cost is $O(N\log N)$.

\section{Small Diagnostic Experiments}
\label{sec:experiments}
\label{sec:experiments-small-diagnostics}

We first run small diagnostic experiments that isolate the two basic claims needed before the larger task evaluations: the batch quantities are cheap to compute, and the LR/RP estimators recover the intended order-statistic gradients.

\parag{Weight visualization and runtime.}
The first diagnostic studies the computation of the batch quantities from \sect{sec:method-batch-equations}.  The left panel of \fig{fig:diagnostic-runtime} visualizes the weights placed on sorted batch ranks when computing $v^\alpha$ for several choices of $\alpha$.  The middle panel measures the runtime of \texttt{lstat\_advantage} as the batch size $N$ grows.  The right panel fixes $N=5000$ and varies $k$, comparing the collapsed \texttt{lstat\_advantage} computation against the full rankwise order-statistic computation.  The results show that the extra OrderGrad computation is negligible: even for batch sizes around $10^4$, the update takes about $1$ ms, so we do not expect it to be a bottleneck in practice.

\begin{figure}[t]
  \centering
  \includegraphics[width=\linewidth]{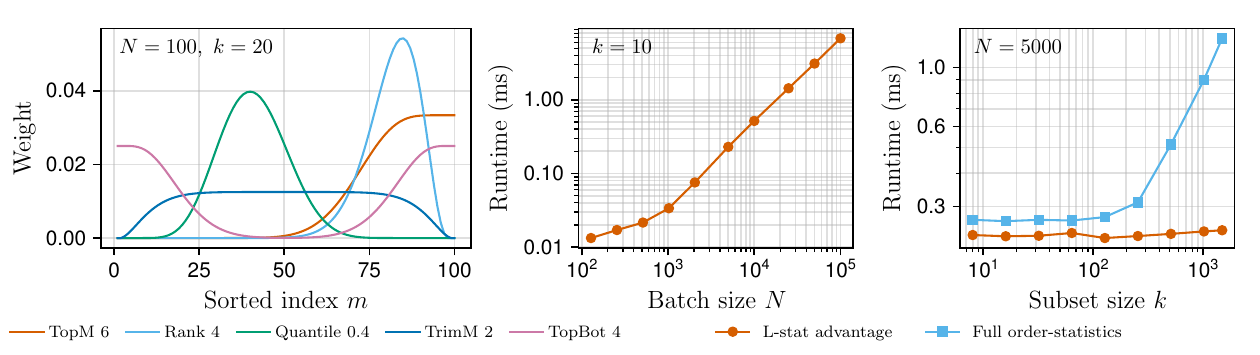} 
\caption{Diagnostic computation experiment.  The panels visualize several rank-weight choices for $v^\alpha$ and measure the runtime of the practical collapsed OrderGrad computation.}
\label{fig:diagnostic-runtime}
\end{figure}

\parag{Gradient bias, variance, and SNR.}
The second diagnostic checks gradient estimation in a one-dimensional setting where the target gradient can be computed accurately.  We use $x\sim\mathcal{N}(\mu,1)$ with reward $R(x)=-x^2$, compute the exact gradient of the $20\%$ CVaR objective with respect to $\mu$, and compare repeated LR and RP estimates across different values of $k$ and $N$.  The left panel of \fig{fig:diagnostic-gradient} reports the LR estimator variance as a function of $N$ for several choices of $k$.  The middle panel reports empirical bias against $k$, comparing the LR and RP estimates to the exact gradient.  Increasing $k$ increases variance, but it also reduces bias relative to the exact CVaR gradient.  Thus larger $k$ approximates the target objective more accurately, at the cost of higher estimation variance.

\parag{Top-$M$@$K$ SNR.}
The right panel of \fig{fig:diagnostic-gradient} studies the signal-to-noise ratio of the Top-$M$@$K$ estimator as $M$ varies.  This experiment isolates the effect of smoothing the sharp Max@$K$ objective: increasing $M$ averages over more high-ranked samples, which should reduce estimator noise while retaining pressure on the upper tail.

\begin{figure}[t]
\centering
\includegraphics[width=\linewidth]{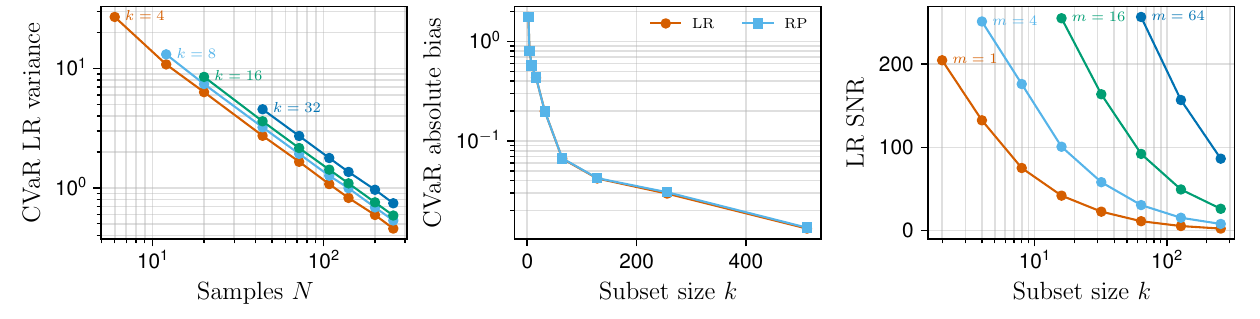}
\caption{Diagnostic gradient experiments.  For $x\sim\mathcal{N}(\mu=0.5,1)$ and $R(x)=-x^2$, we compare LR and RP estimates to the exact $20\%$ CVaR gradient and study the SNR of Top-$M$@$K$ estimators.}
\label{fig:diagnostic-gradient}
\end{figure}

\section{LLM Reasoning Experiments}
\label{sec:reasoning_experiments}

We evaluate our method on challenging math reasoning tasks. Our experiments
address three questions:
(i) does our method improve pass@$k$ on standard reasoning benchmarks compared
to GRPO and \maxpo{} (Max@$K$ Policy Optimization, a special case of OrderGrad; this can be viewed as \citet{chen2025pass}'s method without standard-deviation normalization);
(ii) how does the baseline parameter $m$ affect performance when the
objective size $K$ is fixed; and
(iii) how does the objective size $K$ affect performance when $m$ is fixed.

\subsection{Experimental Setting}
\label{sec:reasoning_setup}

\paragraph{Training.}
We perform RL fine-tuning on Qwen3-4B-Base~\citep{yang2025qwen3} and
Qwen2.5-Math-7B~\citep{yang2025qwen25math}.
We compare our method against two baselines:
GRPO~\citep{shao2024deepseekmath} and \maxpo{}~\citep{chen2025pass}.
\maxpo{} corresponds to the special case of our top-$m$ method with $m=1$,
since both reduce to subtracting the maximum reward of the leave-one-out group.
Unless otherwise stated, the main experiments use $K=4$ and our method with
Top$2$@$4$, with a full sensitivity analysis on $m$ and $K$ reported in
Sec.~\ref{sec:ablation_hyperparam}. Training data and hyperparameters are
provided in App.~\ref{app:llm:hparams}.
Throughout this section, $K$ refers to the training-time Max@$K$ objective size
and $m$ to the number of top-ranked rewards averaged in our baseline.

\paragraph{Evaluation.}
We evaluate on five math reasoning benchmarks: AIME24, AIME25, AMC23,
MATH500~\citep{hendrycks2021measuring}, and Minerva~\citep{lewkowycz2022solving}.
All evaluations use nucleus sampling with temperature $0.6$ and top-$p$ $0.95$.
To reduce evaluation variance, we generate $n=1024$ samples per problem.
We report the unbiased pass@$k$~\citep{chen2021evaluating} metric for
$k\in\{1,2,4,8,\ldots\}$, computed as
\begin{equation}
\text{pass@}k \;:=\; \mathbb{E}_{x\sim\mathcal{D}}\!\left[\,1-\frac{\binom{n-c}{k}}{\binom{n}{k}}\,\right],
\label{eq:passk}
\end{equation}
where $n$ is the number of sampled completions and $c$ is the number of correct
completions among them. We report results up to $k\le 256$ for all benchmarks.

\subsection{Results}
\label{sec:reasoning_main_results}

Figure~\ref{fig:passk_main} summarizes the task-average pass@$k$ curves.  Against GRPO, OrderGrad gives higher large-$k$ performance: pass@$256$ improves by $0.092$ on Qwen3-4B-Base and $0.022$ on Qwen2.5-Math-7B, and it avoids GRPO's diversity collapse on Qwen3-4B-Base.  Against \maxpo{}~($K=4$), Top$2$@$4$ improves most of the pass@$k$ profile but not every endpoint: pass@$1$ improves by $0.057$ on Qwen3-4B-Base and $0.018$ on Qwen2.5-Math-7B, and pass@$256$ improves by $0.026$ on Qwen3-4B-Base, while the largest-$k$ Qwen2.5-Math-7B results are comparable to \maxpo{}~($K=4$) rather than uniformly higher.  Per-benchmark curves and pass@$1$/pass@$256$ are provided in App.~\ref{app:llm:additional_results}
(Figures~\ref{fig:per_task_qwen}--\ref{fig:per_task_qwen3} and
Table~\ref{tab:perbench_pass1_pass256}).

\begin{figure}[t]
  \centering
  \begin{subfigure}[t]{0.48\linewidth}
    \centering
    \includegraphics[width=\linewidth]{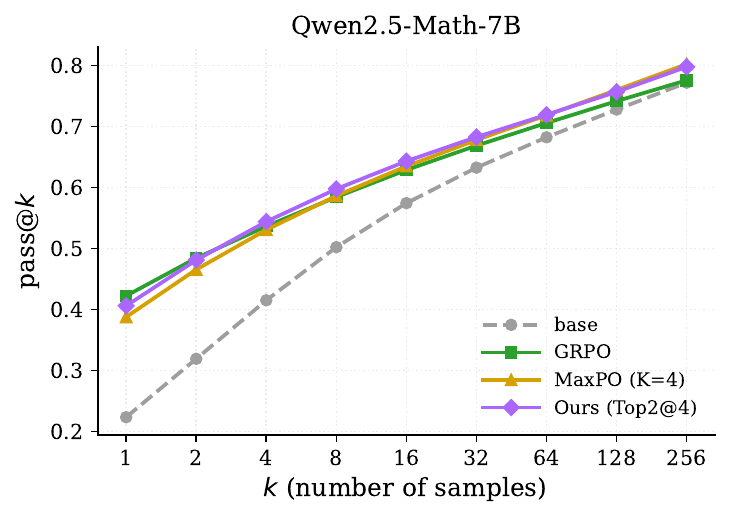}
    \caption{Qwen2.5-Math-7B}
    \label{fig:passk_main_qwen}
  \end{subfigure}\hfill
  \begin{subfigure}[t]{0.48\linewidth}
    \centering
    \includegraphics[width=\linewidth]{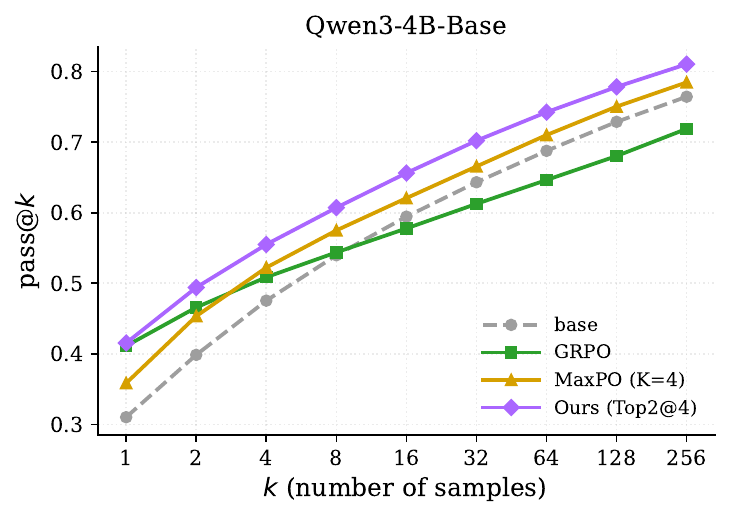}
    \caption{Qwen3-4B-Base}
    \label{fig:passk_main_qwen3}
  \end{subfigure}
  \caption{Task-average pass@$k$ ($k\le 256$). Unweighted average over AIME24,
  AIME25, AMC23, MATH500, and Minerva (temperature $0.6$, top-$p$ $0.95$,
  $n=1024$ per problem). Our method with Top$2$@$4$ outperforms GRPO at
  large $k$ and outperforms \maxpo{}~($K=4$) overall on pass@$k$.}
  \label{fig:passk_main}
\end{figure}

\subsection{Effective Size of $m$, $K$}
\label{sec:ablation_hyperparam}

We analyze the sensitivity of our method to the two hyperparameters on Qwen3-4B-Base
(Figure~\ref{fig:hparam_ablation}): the baseline parameter $m$ at fixed
$K=6$, and the training objective size $K$ at fixed Top $m=2$. Corresponding
results on Qwen2.5-Math-7B are reported in
App.~\ref{app:llm:additional_results}
(Tables~\ref{tab:taskavg_qwen} and~\ref{tab:perbench_pass1_pass256}).

\paragraph{Effect of $m$ ($K=6$ fixed).}
We compare $m\in\{2,3\}$ against \maxpo{}~($K=6$) (Figure~\ref{fig:effN_K6}).
Both configurations improve pass@$1$ by more than $0.05$ and pass@$256$ by
$0.034$--$0.043$ over \maxpo{}~($K=6$) on Qwen3-4B-Base.
Our baseline averages the top-$m$ rewards of the leave-one-out group; larger $m$ averages over more samples and yields a better baseline estimate.

\paragraph{Effect of $K$ ($m=2$ fixed).}
We compare $K\in\{4,6\}$ (Figure~\ref{fig:effK_N2}). Increasing $K$ from $4$
to $6$ decreases pass@$1$ while increasing pass@$256$.
This indicates an exploration--exploitation trade-off: increasing $K$ promotes
broader exploration, which reduces one-shot accuracy but improves the
probability of success when multiple samples are available.

\begin{figure}[t]
  \centering
  \begin{subfigure}[t]{0.48\linewidth}
    \centering
    \includegraphics[width=\linewidth]{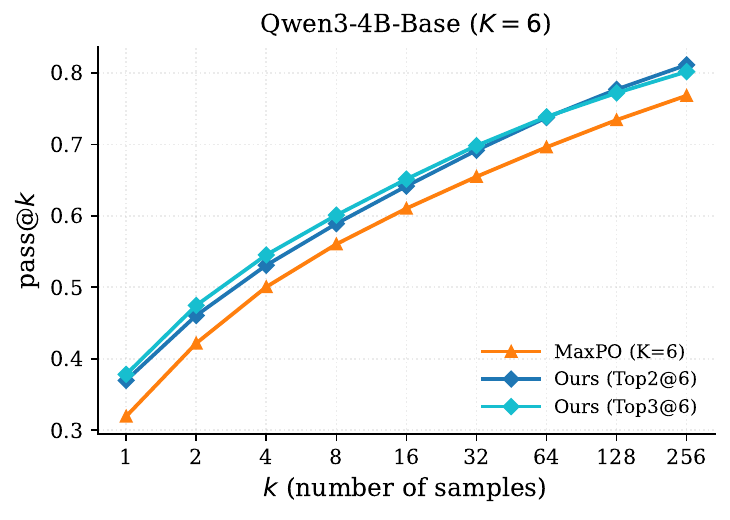}
    \caption{Effect of $m$ at fixed $K=6$.}
    \label{fig:effN_K6}
  \end{subfigure}\hfill
  \begin{subfigure}[t]{0.48\linewidth}
    \centering
    \includegraphics[width=\linewidth]{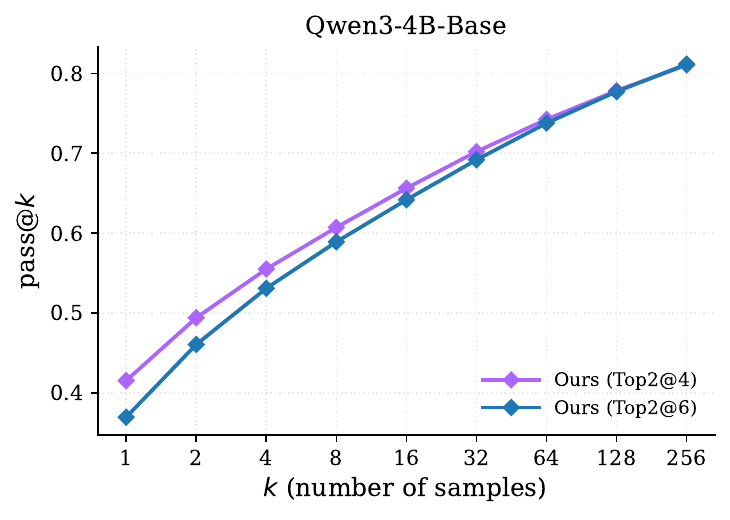}
    \caption{Effect of $K$ at fixed $m=2$.}
    \label{fig:effK_N2}
  \end{subfigure}
  \caption{Effective size of $m$ and $K$ on Qwen3-4B-Base.}
  \label{fig:hparam_ablation}
\end{figure}

\subsection{Different Reward Objectives}
\begin{figure}[t]
  \centering
    \begin{minipage}{\linewidth}
      \centering
      \begin{subfigure}[t]{0.41\linewidth}
        \centering
        \includegraphics[width=\linewidth]{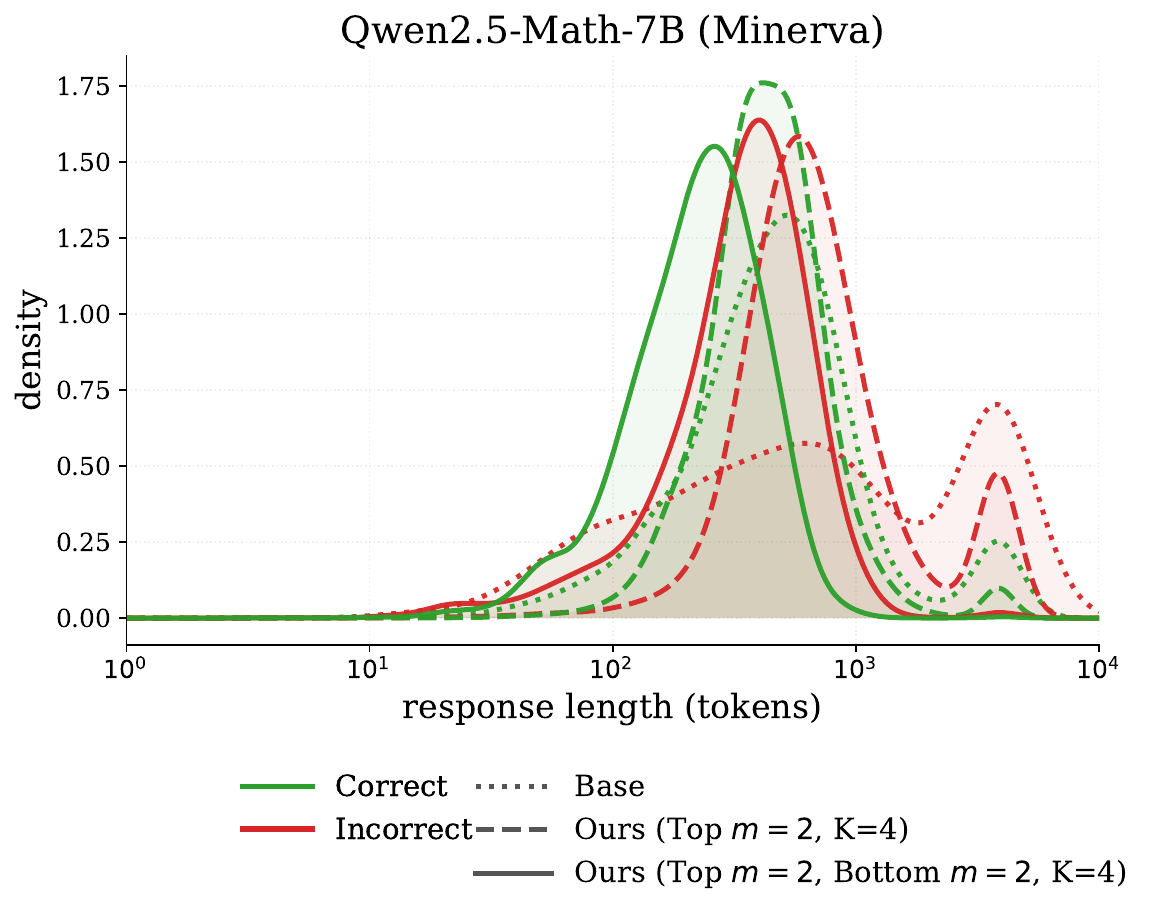}
        \caption{Response length distribution by correct and incorrect subsets on Minerva.}
        \label{fig:length_dist}
      \end{subfigure}\hfill
      \begin{subfigure}[t]{0.55\linewidth}
        \centering
        \includegraphics[width=\linewidth]{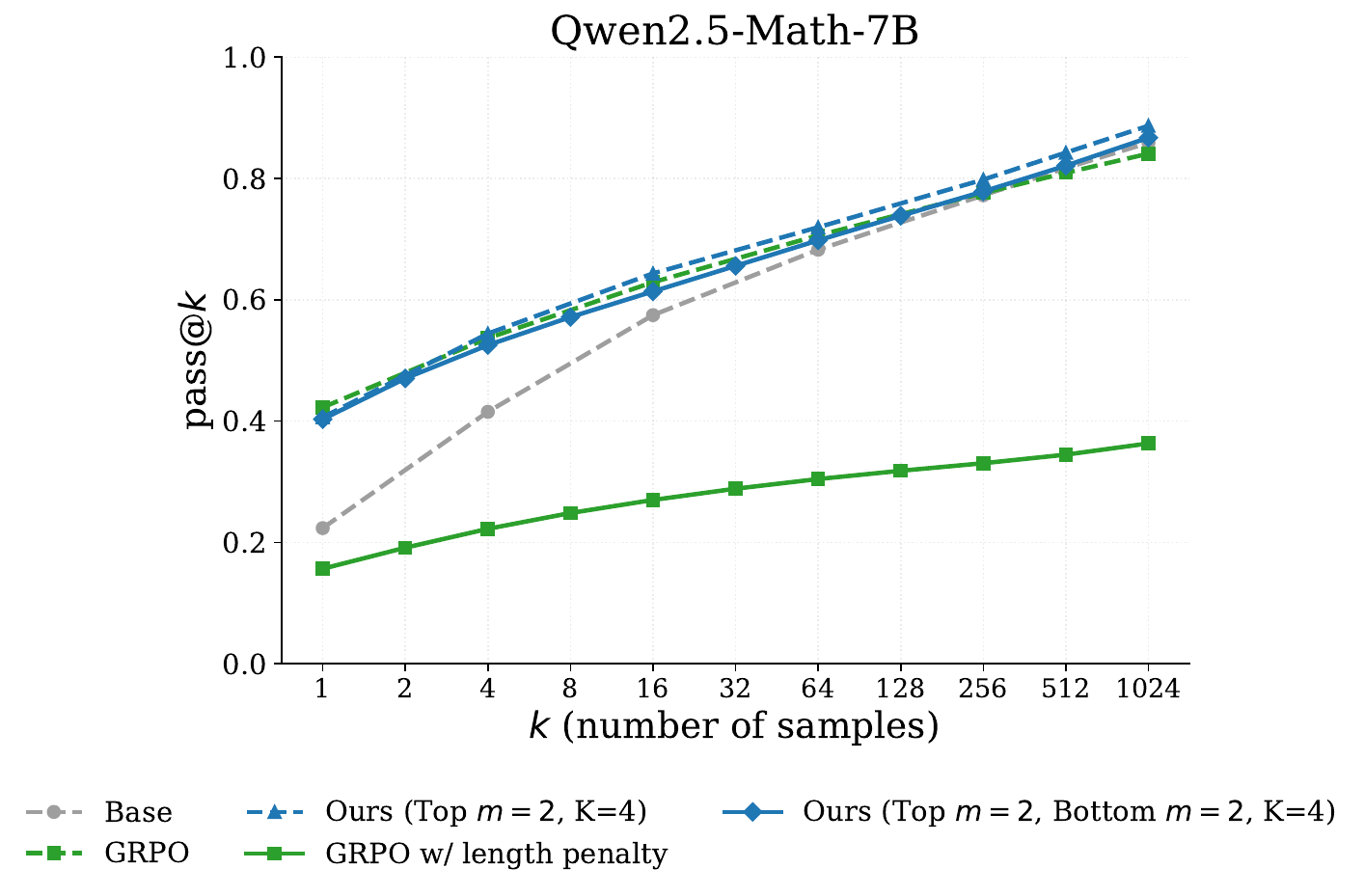}
        \caption{Task-average pass@$k$ ($k\le 1024$). Unweighted average over AIME24,
        AIME25, AMC23, MATH500, and Minerva}
        \label{fig:passk_length_avg}
      \end{subfigure}
    \end{minipage}  \caption{Results for Multi-Reward Objectives with Correctness Reward and Length Penalty (temperature $0.6$, top-$p$ $0.95$,
  $n=1024$ per problem).}
  \label{fig:topbottom}
\end{figure}

We conduct experiments using a correctness reward for the Top $m$ rollouts and a response-length reward for the Bottom $m$ rollouts (shorter responses receive higher reward). In RL for LLMs, such as GRPO \citep{shao2024deepseekmath} and DAPO \citep{yu2025dapo}, a common approach is to apply a weighted sum scalarization of correctness and length rewards uniformly to all rollouts. However, interference from the length penalty can lead to degenerate short outputs. Moreover, LLMs tend to overthink \citep{sui2025stop}
when producing incorrect responses (Fig.~\ref{fig:length_dist}), and the longest rollout can become the bottleneck during Best-of-$N$ inference. We compare two objectives. The GRPO baseline uses $r = r_{\mathrm{correct}} - 0.2 \cdot L/L_{\max}$, where correctness reward $r_{\mathrm{correct}} \in \{0,1\}$, $L$ denotes the response length, and $L_{\max}=3072$. OrderGrad uses $K=4$, Top $m=2$, and Bottom $m=2$. The top two
rollouts use a correctness weight of $1.0$, while the two longest rollouts use a length-penalty weight of $0.4 \cdot L/L_{\max}$. As shown in Fig.~\ref{fig:passk_length_avg}, GRPO is adversely affected by the length penalty, leading to poor performance and output collapse. In contrast, OrderGrad preserves pass@$k$
performance close to the $K=4$, $m=2$ setting while greatly reducing the response length; in particular, extremely long responses are eliminated (Fig.~\ref{fig:length_dist}).

\section{Discussion, Limitations and Conclusions}
\label{sec:discussion}

OrderGrad exposes a large design space: choosing $k$ and $\alpha$ determines where gradient pressure falls across the reward distribution.  Smaller $k$ and smoother weights can stabilize learning, while larger $k$ and sharper weights better match extreme deployment metrics but may increase variance.  We explored only a focused subset over $k$, $\alpha$, reward definitions, and task domains.  Practical use also requires careful consistent tie/atom handling, and accounting for off-policy data, reward-model misspecification, prompt dependence, and PPO clipping/normalization in LLM settings.

OrderGrad is a powerful method that unifies robust, safe, and exploratory metrics under one rank-weighted gradient-estimation framework.  The method is flexible and easy to use because it can be implemented as a reward or advantage transformation and plugged into standard LR or RP updates.  We hope the community finds OrderGrad useful for optimizing beyond the mean.

\section*{Author Contributions}

{\bf Paavo Parmas}: Conceptualized the method, all theoretical derivations, created the OrderGrad code, toy experiments, proposed the TopM-BotM joint experiment, lead the project, lead writer.

Yongmin Kim: Conducted the overall LLM experiments, engineering, writing regarding the LLM experimental sections.

Kohsei Matsutani: Conducted the TopM-BottomM experiments and wrote the corresponding sections, wrote parts of the related work.

Shota Takashiro: Ported the OrderGrad code to the LLM setting, discussed the LLM experiments.

Soichiro Nishimori: Ran the RL experiments in the appendix, participated in discussions.

Takeshi Kojima: Overall student supervision regarding LLM topics in the lab, and discussions regarding the LLM sections.

Yusuke Iwasawa: funding acquisition, overall student supervision and management in the lab.

Yutaka Matsuo: funding acquisition, overall student supervision and management in the lab.

\begin{ack}
Paavo Parmas was supported by JST ACT-X, Japan, Grant Number JPMJAX23CO.
\end{ack}

\bibliographystyle{apalike}
\bibliography{refs}

\clearpage
\appendix
\renewcommand{\appendixpagename}{\Large Appendix}
\appendixpage
\addappheadtotoc
\startcontents[appendices]
\printcontents[appendices]{l}{1}{\setcounter{tocdepth}{2}}
\clearpage
\section{Extended Related Work}
\label{sec:related-work}

\paragraph{Learning criteria beyond the mean.}
This work is motivated by the view that learning should target desirable properties of the loss or reward distribution, not only the mean \citep{holland_tanabe_2023_survey}.  Order-based criteria sort outcomes and weight them by rank; these include L-statistics and L-estimates from robust statistics \citep{daniell_1920_order,bickel_lehmann_1975_location,huber_ronchetti_2009_robust}, robust L-statistic learning \citep{maurer_2020_lstat}, and rank-weighted learning theory \citep{khim_2020_rank_weighted,liu_2022_general_risk}.

\paragraph{Risk measures, quantiles, CVaR, and spectral risks.}
CVaR/expected shortfall and spectral risk measures can be written as weighted integrals of quantiles \citep{acerbi_2002_spectral,acerbi_tasche_2002_expected,acerbi_tasche_2002_coherence,rockafellar_uryasev_2000_cvar,rockafellar_uryasev_2002_cvar_general,shapiro_2013_kusuoka}.  Risk-sensitive and CVaR optimization have been studied in stochastic optimization and RL \citep{chow2015risk,chow2018risk,cardoso_xu_2019_risk_bandit,curi_2020_risk_averse,holland_haress_2021_risk_averse,holland_haress_2022_spectral}.  OrderGrad differs in focusing on a finite-$k$ gradient estimator that can target arbitrary L-statistic weights.

\paragraph{Distributional reinforcement learning.}
Distributional reinforcement learning treats the discounted return as a random variable and aims to model its probability law, rather than collapsing it immediately to an expected value.  Earlier work used such distributional information for robustness and risk-aware decision making before the now-standard deep-RL formulation of the area emerged.  For instance, \citet{sugiyama_2010_lapi} introduced least absolute policy iteration as a more robust counterpart to least-squares value-function approximation (though note that their
appendix also has methods for quantile estimation, etc.), while \citet{morimura_2010_parametric_return_density} and \citet{morimura_2010_nonparametric_return_distribution} estimated return densities and return distributions directly, illustrating how distributional estimates can reveal features of performance that are invisible from the mean alone.  This perspective is naturally connected to risk-sensitive RL, but the two are not identical in emphasis: risk-sensitive methods usually specify a scalar criterion in advance, such as a variance penalty, a percentile, or CVaR, whereas distributional RL first represents a richer object---the law of the return---from which many such criteria can subsequently be extracted.

The contemporary deep distributional-RL literature was largely shaped by the distributional Bellman formulation and the C51 algorithm of \citet{bellemare_2017_distributional}.  That work showed that approximating value distributions can meaningfully affect empirical learning behavior, even when the final control objective is still expected return.  Later analyses clarified why the categorical projection used in C51 is well behaved and related it to the Cram\'er distance \citep{rowland_2018_categorical_distributional}.  Another major family of methods represents the quantile function, or inverse CDF, rather than a categorical distribution.  QR-DQN estimates return values at a fixed set of quantile levels using quantile regression \citep{dabney_2018_quantile_regression_drl}; IQN conditions on sampled quantile fractions to obtain an implicit quantile-function representation \citep{dabney_2018_iqn}; and FQF further learns the quantile fractions themselves \citep{yang_2019_fqf}.  These quantile-based approaches are particularly relevant to OrderGrad because L-statistic and spectral objectives can be expressed as weighted integrals over quantiles.

Distributional RL has since been developed in several additional directions.  D4PG incorporated distributional critics into off-policy actor-critic learning for continuous control \citep{barth_maron_2018_d4pg}, and Rainbow showed that a categorical distributional update can be combined effectively with other improvements to deep Q-learning \citep{hessel_2018_rainbow}.  A theoretical line of work investigates which summaries of the return distribution are propagated by distributional dynamic programming, including statistics-and-samples formulations and moment-matching approaches \citep{rowland_2019_statistics_samples,nguyen_tang_2021_moment}.  Distributional value estimates have also been used to encourage exploration by exploiting upper-tail behavior or variability in the learned return distribution \citep{mavrin_2019_distributional_exploration}.  At the same time, comparative studies indicate that the benefits of distributional learning are sensitive to the approximation architecture and learning regime, rather than being an automatic consequence of representing more information \citep{lyle_2019_comparative}.  For a comprehensive account of the mathematical and algorithmic foundations of the field, see \citet{bellemare_2023_distributional_book}.

OrderGrad is best viewed as complementary to this body of work.  In typical distributional-RL methods, a critic estimates a return distribution, or selected statistics of that distribution, and the resulting policy may still be chosen by maximizing expected return.  OrderGrad instead provides a direct policy-gradient estimator for a specified finite-$k$ order-statistic objective.  A distributional critic could therefore be useful for choosing, estimating, or interpreting the reward distribution associated with a target objective, but OrderGrad itself does not require a parametric model of the full return law.  Conversely, OrderGrad offers a policy-gradient mechanism for directly optimizing deployment-oriented criteria such as Top-$M$@$K$, best-of-$K$, lower-tail CVaR-like averages, medians, trimmed means, and other rank-weighted objectives that distributional RL is often able to represent but does not necessarily optimize directly.

\paragraph{Top-$k$, maximum losses, and best-of-$k$ training.}
Average top-$k$ and maximum-loss objectives appear in supervised learning and robust optimization \citep{fan_2017_topk,shalev_wexler_2016_maximal,ogryczak_tamir_2003_sum_k_largest}.  OrderGrad treats Max@$k$ and Top-$M$@$k$ as instances of the same order-statistic gradient framework and emphasizes their use as reward transformations in policy-gradient and generative-model training.

\paragraph{Gradient estimators.}
Score-function, infinitesimal perturbation, and pathwise derivative estimators are classical tools for stochastic simulation and learning \citep{lecuyer_1990_unified,williams_1992_reinforce,mohamed_2020_mc_gradient}.  Reparameterization gradients are standard in latent-variable generative models \citep{kingma_welling_2014_autoencoding,rezende_2014_stochastic}, and recent work studies unified or composite estimators that combine LR and RP signals \citep{parmas_2021_unified,parmas_seno_2022_proppo}.  OrderGrad is complementary: it contributes the conditional order-statistic advantage needed to apply these estimator families to L-statistic objectives.

\paragraph{RLVR in LLMs}
Reinforcement learning with verifiable rewards (RLVR) \citep{lambert2025tulu,guo2025deepseek} is a training paradigm that optimizes LLM policies using deterministic feedback from objective verifiers rather than subjective human preference \citep{paul2017deep}. Despite this promise \citep{guo2025deepseek,jaech2024openai}, recent studies suggest that RLVR in LLMs primarily serves to amplify reasoning behaviors already present in the base model \citep{liu2025understanding,zhao2025echochamber,ai2025rethinking}. \citet{yue2025does} investigated the pass@$k$ metric \citep{chen2021evaluating,song2025mind,dang2025weight,wen2025reinforcement,wu2025invisible} and found experimentally that as $k$ increases, the base model's pass@$k$ eventually exceeds that of the model after RLVR. This phenomenon has been linked to diversity collapse \citep{dang2025weight,cui2025entropy} and squeezing reasoning paths \citep{matsutani2025rl,hu2025how,bu2025consistency}. \citet{wang2025octothinker,zhang2025learning,chen2025the,zhang2025interplay} argue that mid-training is a prerequisite for effective RLVR, as it underperforms on Llama \citep{grattafiori2024llama3} compared to Qwen \citep{yang2025qwen25,yang2025qwen3}.

\paragraph{Exploration in RL for LLMs}
Given these limitations, prior studies have incorporated exploration into RLVR. \citet{cui2025entropy,cheng2025reasoning,zheng2025first,shen2025entropy,jiang2025rethinking} leveraged entropy bonuses to encourage exploration via policy uncertainty. \citet{song2025outcome} proposed outcome-based exploration using UCB-style bonuses \citep{auer2002finite}. \citet{yu2025restrain} applied self-penalization by assigning negative rewards to high-confidence answers that deviate from the majority consensus. \citet{he2025rewarding} improved exploration by up-weighting low probability but correct trajectories, and \citet{gao2025navigate} adopted Random Network Distillation (RND) \citep{burda2018exploration} to provide bonuses for unknown trajectories. \citet{zhou2025evolving} augmented training with a semantic novelty score computed from embeddings, \citet{li2025jointly} employed a semantic diversity score with an external semantic comparator, and \citet{tuyls2025representation} computed a representation-based novelty score from hidden states to boost exploration. \citet{liang2025can} leveraged reward model gradients to improve temperature sampling. \citet{setlur2025e3} promoted in-context exploration via skill asymmetries and negative gradients.

\section{Notation}
\label{app:notation-ranknote}

This appendix gives detailed derivations behind the main text.  We follow the organization: first an exact known-$(\bar R,p)$ regime, then a sampling regime with an i.i.d. batch.  The order-statistic sample size is $k$ and the optimization batch size is $N$.

\paragraph{Regime A: known finite distribution.}
There are $m$ arms.  Arm $b\in\{1,\ldots,m\}$ has deterministic reward $\bar R_b\in\R$ and sampling probability $p_b$, with $p\in\Delta^{m-1}$.  We draw
\[
A_1,\ldots,A_k\stackrel{i.i.d.}{\sim}p,
\qquad
R_\ell\defeq \bar R_{A_\ell},
\qquad
R_{(1:k)}\le\cdots\le R_{(k:k)}.
\]
For each rank $j\in\{1,\ldots,k\}$, the exact population value, conditional value, and arm advantage are
\[
\bar v_j\defeq \E[R_{(j:k)}],
\qquad
\bar q_{b,j}\defeq \E[R_{(j:k)}\mid A_1=b],
\qquad
\bar a_{b,j}\defeq \bar q_{b,j}-\bar v_j.
\]
For L-statistic weights $\alpha\in\R^k$,
\[
\bar v^\alpha\defeq \sum_{j=1}^k\alpha_j\bar v_j,
\qquad
\bar q_b^{(\alpha)}\defeq\sum_{j=1}^k\alpha_j\bar q_{b,j},
\qquad
\bar a_b^{(\alpha)}\defeq\sum_{j=1}^k\alpha_j\bar a_{b,j}.
\]

\paragraph{Regime B: sampled batch.}
In the sampling regime we observe
\[
A^{(1)},\ldots,A^{(N)}\stackrel{i.i.d.}{\sim}p,
\qquad
R_i\defeq \bar R_{A^{(i)}}.
\]
Batch-computable quantities are unbarred.  The value estimator $v_j$ is defined for $1\le k\le N$; the LR leave-one-out advantage uses $1\le k<N$.  Within this range, $v_j$ is the U-statistic value estimator, $q_{i,j}$ is the include-one subset expectation, $v^{(-i)}_j$ is the leave-one-out subset expectation, and
\[
a_{i,j}\defeq q_{i,j}-v^{(-i)}_j,
\qquad
v^\alpha\defeq\sum_{j=1}^k\alpha_j v_j,
\qquad
a_i^{(\alpha)}\defeq\sum_{j=1}^k\alpha_j a_{i,j}.
\]

\paragraph{Ties.}
Equal rewards are not collapsed.  We use a stable total ordering of arms or batch positions.  With ties, intermediate CDF tables should be interpreted as stable-rank CDFs rather than numeric CDFs; because tied entries have identical rewards, the final expectations remain unchanged.

\subsection*{Binomial-tail convention}
Throughout, $\operatorname{Bin}(n,\pi)$ denotes a binomial random variable, and
\[
\Prob(\operatorname{Bin}(n,\pi)\ge s)=1\quad(s\le 0),
\qquad
\Prob(\operatorname{Bin}(n,\pi)\ge s)=0\quad(s>n).
\]

\section{Exact known-$(\bar R,p)$ formulas}
\label{app:known-distribution-details}

Let $\pi$ be a stable sorting permutation of arms:
\[
\bar R_{(1)}\le\bar R_{(2)}\le\cdots\le\bar R_{(m)},
\qquad
p_{(t)}\defeq p_{\pi(t)}.
\]
Let $\rho(b)$ be the stable sorted position of arm $b$, so that $\pi(\rho(b))=b$.  Define the stable-rank CDF grid
\[
F_0\defeq0,
\qquad
F_t\defeq \sum_{s=1}^t p_{(s)},
\qquad t=1,\ldots,m.
\]
When all rewards are distinct, $F_t=\Prob(R\le \bar R_{(t)})$.  With ties, $F_t$ is the probability that the sampled arm lies among the first $t$ entries of the stable sorted list.

\subsection{Unconditional order statistics}
\label{app:known-unconditional}

Fix $j\in\{1,\ldots,k\}$.  For each stable threshold $t$, define the count
\[
Y_t\defeq \#\{\ell\in\{1,\ldots,k\}: \rho(A_\ell)\le t\}.
\]
Then $Y_t\sim\operatorname{Bin}(k,F_t)$.

\begin{lemma}[Order-statistic CDF as a binomial tail]
\label{lem:app-bin-tail-uncond}
For each $t=0,\ldots,m$,
\[
\Prob\{\text{the stable rank of }R_{(j:k)}\text{ is at most }t\}
=
\Prob\{\operatorname{Bin}(k,F_t)\ge j\}.
\]
When rewards are distinct, this event is exactly $\{R_{(j:k)}\le \bar R_{(t)}\}$.
\end{lemma}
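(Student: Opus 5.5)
The plan is to reduce the event about the $j$th order statistic to a counting event about the $k$ draws, and then identify the count's distribution.

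\textbf{Setup.} Work with the stable ranks $\rho(A_1),\ldots,\rho(A_k)\in\{1,\ldots,m\}$ rather than the rewards themselves. The stable total order on arms induces a total order on the $k$ draws: order by $\rho(A_\ell)$, breaking ties among identical arms by draw index. Since $\rho$ is monotone with respect to $\bar R$, the $j$th smallest draw in this order has reward $R_{(j:k)}$. Its stable rank is the $j$th smallest value in the multiset $\{\rho(A_\ell)\}$; call it $\rho_{(j)}$. Any consistent tie-break gives the same multiset, so $\rho_{(j)}$ is well defined.

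\textbf{Key combinatorial step.} For integers $t$, show
\[
\{\rho_{(j)}\le t\} = \{Y_t\ge j\}.
\]
If at least $j$ of the values $\rho(A_\ell)$ are $\le t$, then the $j$th smallest is $\le t$. Conversely, if the $j$th smallest is $\le t$, then the $j$ smallest values are all $\le t$, so $Y_t\ge j$. This is the standard order-statistic/count duality and needs no distributional assumptions.

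\textbf{Distribution of the count.} $Y_t=\sum_\ell \ind\{\rho(A_\ell)\le t\}$ is a sum of $k$ i.i.d.\ Bernoulli variables. Each has success probability
\[
\Prob(\rho(A_1)\le t)=\sum_{s\le t}p_{(s)}=F_t,
\]
so $Y_t\sim\operatorname{Bin}(k,F_t)$, which gives the identity.

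\textbf{Boundary cases.} At $t=0$ we have $F_0=0$, so $Y_0=0$ and both sides are $0$ because $j\ge1$. At $t=m$ both sides are $1$. This is consistent with the binomial-tail convention.

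\textbf{Distinct rewards.} When all rewards are distinct, $\rho(b)\le t$ holds iff $\bar R_b\le\bar R_{(t)}$. Hence $\rho_{(j)}\le t$ iff $R_{(j:k)}\le\bar R_{(t)}$, which is the final claim.

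\textbf{Main obstacle.} The only delicate point is the tie handling: "the stable rank of $R_{(j:k)}$" must be defined unambiguously. I would handle this by defining it explicitly as $\rho_{(j)}$, the $j$th smallest value of the multiset $\{\rho(A_\ell)\}$. Since tied arms have equal rewards, this quantity is consistent with $R_{(j:k)}$.
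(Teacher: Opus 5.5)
Your proposal is correct and follows the same route as the paper: the event that the $j$th order statistic has stable rank at most $t$ equals $\{Y_t\ge j\}$, and $Y_t\sim\operatorname{Bin}(k,F_t)$. Defining the stable rank explicitly as $\rho_{(j)}$ and checking the boundary and distinct-reward cases only fills in details the paper leaves implicit.
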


\begin{proof}
The $j$th order statistic has stable rank at most $t$ if and only if at least $j$ of the $k$ sampled arms have stable rank at most $t$.  That count is $Y_t$, which is binomial with parameters $(k,F_t)$.
\end{proof}

Define the tail table
\begin{equation}
\label{eq:app-Tk}
T^{(k)}_{t,j}
\defeq
\Prob\{\operatorname{Bin}(k,F_t)\ge j\},
\qquad
t=0,\ldots,m,\quad j=1,\ldots,k.
\end{equation}
Differencing adjacent CDF values gives the probability that stable sorted entry $t$ supplies the $j$th order statistic:
\begin{equation}
\label{eq:app-Wwr}
W^{\mathrm{wr}}_{t,j}
\defeq
T^{(k)}_{t,j}-T^{(k)}_{t-1,j},
\qquad
t=1,\ldots,m.
\end{equation}
Therefore
\begin{equation}
\label{eq:app-vbar}
\bar v_j
=
\sum_{t=1}^m \bar R_{(t)}W^{\mathrm{wr}}_{t,j}.
\end{equation}

\paragraph{Matrix form.}
Let $\bar R_{(\cdot)}=(\bar R_{(1)},\ldots,\bar R_{(m)})^\top$.  Let $T^{(k)}\in\R^{(m+1)\times k}$ collect \eqref{eq:app-Tk}, and let $D\in\R^{m\times(m+1)}$ be the forward-difference operator
\[
D_{t,t}=1,
\qquad
D_{t,t-1}=-1,
\qquad
\text{all other entries }0.
\]
Then $W^{\mathrm{wr}}=DT^{(k)}$ and
\begin{equation}
\label{eq:app-vbar-matrix}
\bar v=(\bar v_1,\ldots,\bar v_k)
=
\bar R_{(\cdot)}^\top W^{\mathrm{wr}}
=
\bar R_{(\cdot)}^\top DT^{(k)}.
\end{equation}

\subsection{Conditional-on-first-draw order statistics}
\label{app:known-conditional}

Fix arm $b$.  Define the stable-rank indicator
\[
\delta_{b,t}\defeq \ind\{\rho(b)\le t\}.
\]
Conditioning on $A_1=b$ pins one draw and leaves $A_2,\ldots,A_k$ i.i.d.  For a stable threshold $t$, the pinned draw contributes $\delta_{b,t}$ to the count below the threshold, while the remaining $k-1$ draws contribute a binomial count with success probability $F_t$.  Hence
\begin{equation}
\label{eq:app-bin-tail-cond}
\Prob\{\text{stable rank of }R_{(j:k)}\text{ is at most }t\mid A_1=b\}
=
\Prob\{\operatorname{Bin}(k-1,F_t)\ge j-\delta_{b,t}\}.
\end{equation}
Define
\begin{equation}
\label{eq:app-Tb}
T^{(b)}_{t,j}
\defeq
\Prob\{\operatorname{Bin}(k-1,F_t)\ge j-\delta_{b,t}\}.
\end{equation}
The conditional mass table is
\begin{equation}
\label{eq:app-Wb}
W^{(b)}_{t,j}
\defeq
T^{(b)}_{t,j}-T^{(b)}_{t-1,j},
\end{equation}
and therefore
\begin{equation}
\label{eq:app-qbar}
\bar q_{b,j}
=
\sum_{t=1}^m \bar R_{(t)}W^{(b)}_{t,j}.
\end{equation}
In matrix form,
\begin{equation}
\label{eq:app-qbar-matrix}
\bar q_b=(\bar q_{b,1},\ldots,\bar q_{b,k})
=
\bar R_{(\cdot)}^\top W^{(b)}
=
\bar R_{(\cdot)}^\top DT^{(b)}.
\end{equation}

\paragraph{Shared precomputation.}
Let
\[
G_{t,s}\defeq \Prob\{\operatorname{Bin}(k-1,F_t)\ge s\}.
\]
Then $G$ can be precomputed once for all $t$ and integer thresholds $s$, and
\[
T^{(b)}_{t,j}=G_{t,j-\delta_{b,t}}.
\]
Conditioning on arm $b$ only shifts the rank threshold by $0$ or $1$ at each stable sorted grid point.

\subsection{Mixture identity, advantages, and L-statistics}
\label{app:known-mixture}

\begin{proposition}[Mixture identity]
\label{prop:app-mixture}
For every rank $j$,
\[
\bar v_j=\sum_{b=1}^m p_b\bar q_{b,j}.
\]
\end{proposition}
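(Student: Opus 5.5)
The identity is an instance of the law of total expectation, conditioning on the first draw $A_1$. Since $\bar q_{b,j}$ is defined as $\E[R_{(j:k)}\mid A_1=b]$ and $A_1\sim p$, we have
\[
\bar v_j=\E[R_{(j:k)}]=\E\bigl[\E[R_{(j:k)}\mid A_1]\bigr]=\sum_{b=1}^m \Prob(A_1=b)\,\E[R_{(j:k)}\mid A_1=b]=\sum_{b=1}^m p_b\bar q_{b,j}.
\]
Conditioning on arms with $p_b=0$ is vacuous; those terms carry zero weight. Since the arm space is finite, every expectation is a finite sum, so no integrability issues arise.

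Because the paper computes both sides through the tables $W^{\mathrm{wr}}$ and $W^{(b)}$, I would also check that the explicit formulas are consistent. It suffices to verify the identity at the level of CDF tables, namely $\sum_b p_b T^{(b)}_{t,j}=T^{(k)}_{t,j}$ for every $t$. The identity for $W$ then follows by linearity of the difference operator $D$, and the identity for $\bar v_j$ and $\bar q_{b,j}$ follows by \eqref{eq:app-vbar} and \eqref{eq:app-qbar}.

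To verify the CDF identity, split the sum over arms according to $\delta_{b,t}$. Arms with $\rho(b)\le t$ have total mass $F_t$, and the remaining arms have mass $1-F_t$. This gives
\[
\sum_b p_b T^{(b)}_{t,j}=F_t\,\Prob\{\operatorname{Bin}(k-1,F_t)\ge j-1\}+(1-F_t)\,\Prob\{\operatorname{Bin}(k-1,F_t)\ge j\}.
\]
This is the first-step decomposition of $\operatorname{Bin}(k,F_t)$: the first trial succeeds with probability $F_t$, and the other $k-1$ trials are independent. Hence the right-hand side equals $\Prob\{\operatorname{Bin}(k,F_t)\ge j\}=T^{(k)}_{t,j}$, as required by Lemma~\ref{lem:app-bin-tail-uncond}.

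The only delicate point is the boundary convention of the binomial tails when $j=1$ and at $t=0$. The paper's convention that $\Prob(\operatorname{Bin}\ge s)=1$ for $s\le 0$ and $=0$ for $s>n$ makes both sides agree in these edge cases. Ties cause no difficulty, because everything is phrased in terms of stable ranks.
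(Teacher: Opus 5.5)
Your proof is correct and matches the paper's argument: both apply the law of total expectation over the first draw $A_1\sim p$, which immediately gives $\bar v_j=\sum_b p_b\bar q_{b,j}$. Your additional check that $\sum_b p_b T^{(b)}_{t,j}=T^{(k)}_{t,j}$ via the first-trial decomposition of $\operatorname{Bin}(k,F_t)$ is also valid, boundary cases included. It is a consistency check on the explicit tables that the proposition itself does not require.
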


\begin{proof}
By the law of total expectation over the first sampled arm,
\[
\E[R_{(j:k)}]
=
\sum_{b=1}^m
\E[R_{(j:k)}\mid A_1=b]\Prob(A_1=b)
=
\sum_{b=1}^m p_b\bar q_{b,j}.
\]
\end{proof}

The exact arm advantage is
\begin{equation}
\label{eq:app-adv-known}
\bar a_{b,j}\defeq \bar q_{b,j}-\bar v_j,
\qquad
\bar a_b=(\bar a_{b,1},\ldots,\bar a_{b,k}).
\end{equation}
Consequently $\sum_{b=1}^m p_b\bar a_{b,j}=0$ for every $j$.  For an L-statistic,
\begin{equation}
\label{eq:app-known-lstat}
\bar v^\alpha=\alpha^\top\bar v,
\qquad
\bar q_b^{(\alpha)}=\alpha^\top\bar q_b,
\qquad
\bar a_b^{(\alpha)}=\alpha^\top\bar a_b.
\end{equation}

\subsection{Known-regime likelihood-ratio gradient}
\label{app:known-lr-gradient}

Assume the rewards $\bar R_b$ are fixed and the probabilities $p_\theta(b)$ are differentiable with common support.  Let
\[
T_\alpha(A_{1:k})
\defeq
\sum_{j=1}^k \alpha_j R_{(j:k)}.
\]
Then $\bar v^\alpha(\theta)=\E_\theta[T_\alpha(A_{1:k})]$.  The score-function identity gives
\begin{align}
\nabla_\theta\bar v^\alpha(\theta)
&=
\E_\theta\!\left[
T_\alpha(A_{1:k})
\sum_{\ell=1}^k
\nabla_\theta\log p_\theta(A_\ell)
\right]\\
&=
k\,\E_\theta\!\left[
T_\alpha(A_{1:k})
\nabla_\theta\log p_\theta(A_1)
\right]\\
&=
k\sum_{b=1}^m
p_\theta(b)\bar q_b^{(\alpha)}
\nabla_\theta\log p_\theta(b).
\end{align}
Because $\sum_b p_\theta(b)\nabla_\theta\log p_\theta(b)=0$, subtracting the baseline $\bar v^\alpha$ gives
\begin{equation}
\label{eq:app-known-lr-gradient}
\nabla_\theta\bar v^\alpha(\theta)
=
k\sum_{b=1}^m
p_\theta(b)\bar a_b^{(\alpha)}
\nabla_\theta\log p_\theta(b).
\end{equation}

\section{Batch values and advantage computation}
\label{app:advantage-computation}

Throughout this section, the realized batch rewards $R_1,\ldots,R_N$ are fixed and subset expectations are with respect to uniform subset selection without replacement.  Write $[N]\defeq\{1,\ldots,N\}$.

\subsection{U-statistic values}
\label{app:batch-ustat}

For $1\le k\le N$, define
\begin{equation}
\label{eq:app-Ustat-def}
v_j
\defeq
\frac{1}{\binom{N}{k}}
\sum_{\substack{S\subseteq [N]\\|S|=k}}
(R_S)_{(j:k)}.
\end{equation}
Equivalently, if $S$ is a uniform size-$k$ subset of $[N]$ independent of the rewards, then
\[
v_j=\E[(R_S)_{(j:k)}\mid R_{1:N}].
\]

\begin{lemma}[Random subsample of i.i.d. observations]
\label{lem:app-subsample-iid}
Let $Z_1,\ldots,Z_M$ be i.i.d. and let $J$ be a uniformly random subset of $\{1,\ldots,M\}$ of size $s$, chosen independently of $Z_{1:M}$.  For any symmetric function $h$,
\[
\E[h(Z_J)]=\E[h(Z_1,\ldots,Z_s)].
\]
\end{lemma}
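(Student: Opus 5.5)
We condition on the random subset $J$ and use its independence from the data. Write $J=\{j_1<\cdots<j_s\}$. Because $h$ is symmetric, $h(Z_J)$ is well defined as a function of the unordered collection $\{Z_j: j\in J\}$, so we may list it as $h(Z_{j_1},\ldots,Z_{j_s})$. By the tower property,
\[
\E[h(Z_J)]=\sum_{\substack{T\subseteq[M]\\|T|=s}}\Prob(J=T)\,\E\!\left[h(Z_J)\mid J=T\right].
\]

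The key step is to evaluate each conditional expectation. On the event $\{J=T\}$ we have $h(Z_J)=h(Z_T)$. Since $J$ is independent of $Z_{1:M}$, we get
\[
\E[h(Z_J)\mid J=T]=\E[h(Z_T)].
\]
For a fixed index set $T=\{t_1<\cdots<t_s\}$, the vector $(Z_{t_1},\ldots,Z_{t_s})$ consists of $s$ distinct i.i.d.\ coordinates. Its joint law is therefore the $s$-fold product of the common marginal, which is the same as the law of $(Z_1,\ldots,Z_s)$. Hence $\E[h(Z_T)]=\E[h(Z_1,\ldots,Z_s)]$ for every $T$.

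Because this value does not depend on $T$, it factors out of the sum. The remaining weights satisfy $\sum_T\Prob(J=T)=1$, which gives the claim. Uniformity of $J$ is not actually needed; independence of $J$ from the data and $|J|=s$ almost surely suffice.

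The only point that needs care is integrability, so that the conditional expectations and the interchange are justified. We assume $\E|h(Z_1,\ldots,Z_s)|<\infty$, which the equal-in-law argument then transfers to every $h(Z_T)$. In the paper's use, $h$ is an order statistic of the rewards, so this reduces to $\E|R|<\infty$: each order statistic of $s$ rewards is bounded in absolute value by $\sum_{\ell=1}^s |R_\ell|$.

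This lemma is what makes $v_j$ unbiased. Take $M=N$ and $h=(\cdot)_{(j:k)}$; then $\E[v_j]=\E[(R_S)_{(j:k)}]=\bar v_j$.
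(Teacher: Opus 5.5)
Your proof is correct and follows the same route as the paper's: condition on $J$, use independence of $J$ from the data and the i.i.d.\ assumption to see that the selected values have the law of $(Z_1,\ldots,Z_s)$, then average over $J$. You also make explicit two points the paper leaves implicit, and both are accurate: the integrability condition $\E|h(Z_1,\ldots,Z_s)|<\infty$, and the fact that only independence of $J$ and $|J|=s$ are needed, not uniformity.
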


\begin{proof}
Condition on the subset $J$.  Since $Z_{1:M}$ are i.i.d., the selected values have the same joint distribution as the first $s$ values, up to ordering.  Averaging over $J$ preserves this distributional equality for symmetric functions.
\end{proof}

\begin{theorem}[Unbiasedness of the U-statistic value estimator]
\label{thm:app-Ustat-unbiased}
If $R_1,\ldots,R_N$ are i.i.d. with the same distribution as $R$, then
\[
\E[v_j]=\bar v_j.
\]
\end{theorem}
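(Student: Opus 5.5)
The plan is to write $v_j$ as an average of $\binom{N}{k}$ terms and apply linearity of expectation term by term. By definition \eqref{eq:app-Ustat-def},
\[
\E[v_j]=\frac{1}{\binom{N}{k}}\sum_{\substack{S\subseteq[N]\\|S|=k}}\E\bigl[(R_S)_{(j:k)}\bigr],
\]
so it suffices to show that each summand equals $\bar v_j=\E[R_{(j:k)}]$. Here $R_{(j:k)}$ is the $j$th order statistic of $k$ i.i.d. copies of $R$. Once every summand is known to equal $\bar v_j$, the average of $\binom{N}{k}$ identical terms is $\bar v_j$, and we are done.

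To identify each summand, fix a deterministic subset $S=\{s_1<\cdots<s_k\}$. The map $h(z_1,\ldots,z_k)\defeq z_{(j:k)}$, the $j$th smallest entry, is a symmetric function of its arguments. Since $R_1,\ldots,R_N$ are i.i.d., the vector $(R_{s_1},\ldots,R_{s_k})$ has the same joint law as $(R_1,\ldots,R_k)$, and therefore $\E[h(R_S)]=\E[h(R_1,\ldots,R_k)]=\bar v_j$.

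Equivalently, one can invoke Lemma~\ref{lem:app-subsample-iid} with $M=N$, $s=k$, and this $h$. Using the representation $v_j=\E[(R_S)_{(j:k)}\mid R_{1:N}]$ with $S$ uniform and independent of the rewards, the tower property gives $\E[v_j]=\E[h(R_S)]=\E[h(R_1,\ldots,R_k)]=\bar v_j$.

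The only technical point is integrability: each term requires $\E|R_{(j:k)}|<\infty$. This follows from $\E|R|<\infty$, which is implicit in $\bar v_j$ being defined, since $|z_{(j:k)}|\le\sum_\ell|z_\ell|$. Ties cause no difficulty. Under the stable ordering convention, the value $(R_S)_{(j:k)}$ does not depend on how ties are broken, so $h$ remains a well-defined symmetric function. No step here is a real obstacle; the argument is simply exchangeability plus linearity.
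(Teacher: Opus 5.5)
Your proposal is correct and essentially matches the paper, whose proof is your ``equivalently'' route: it writes $\E[v_j]=\E[(R_S)_{(j:k)}]$ for a uniform subset $S$ independent of the rewards and uses Lemma~\ref{lem:app-subsample-iid} to identify the subsample with $k$ fresh i.i.d. draws, while your deterministic-subset linearity argument is the same exchangeability fact unrolled over the $\binom{N}{k}$ terms. One small correction to your integrability remark: finiteness of $\bar v_j$ does not imply $\E|R|<\infty$ (the median of three i.i.d. Cauchy draws has a finite mean), but the proof loses nothing, since each summand is equal in law to $R_{(j:k)}$ and is therefore integrable exactly when $\bar v_j$ is defined.
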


\begin{proof}
Let $S$ be a uniform size-$k$ subset independent of the rewards.  Then
\[
\E[v_j]=\E[(R_S)_{(j:k)}].
\]
By Lemma~\ref{lem:app-subsample-iid}, the multiset $\{R_i:i\in S\}$ has the same distribution as $k$ fresh i.i.d. rewards.  Therefore $(R_S)_{(j:k)}$ has the same distribution as $R_{(j:k)}$, and the expectation is $\bar v_j$.
\end{proof}

\subsection{Sort-plus-weights formula for values}
\label{app:batch-sort-weights}

After sorting the batch as $R_{(1:N)}\le\cdots\le R_{(N:N)}$, the probability that position $m$ supplies the $j$th order statistic of a uniform size-$k$ subset is
\begin{equation}
\label{eq:app-W-batch}
W_{m,j}
=
\frac{\binom{m-1}{j-1}\binom{N-m}{k-j}}{\binom{N}{k}}.
\end{equation}
Indeed, the subset must include position $m$, choose $j-1$ positions from the $m-1$ lower positions, and choose $k-j$ positions from the $N-m$ higher positions.  Consequently
\begin{equation}
\label{eq:app-v-batch-matrix}
v_j=\sum_{m=1}^N R_{(m:N)}W_{m,j},
\qquad
v=(v_1,\ldots,v_k)=\mathbf{R}_{(\cdot)}^\top W,
\end{equation}
where $\mathbf{R}_{(\cdot)}=(R_{(1:N)},\ldots,R_{(N:N)})^\top$.  For L-statistic weights,
\begin{equation}
\label{eq:app-valpha-batch}
v^\alpha
=
\alpha^\top v
=
\mathbf{R}_{(\cdot)}^\top W\alpha.
\end{equation}

\subsection{Include-one and leave-one-out definitions}
\label{app:batch-include-loo-defs}

Assume $1\le k<N$.  Let $S$ be a uniform size-$k$ subset of $[N]$.  For each original index $i$,
\begin{align}
\label{eq:app-def-include-loo}
q_{i,j}
&\defeq
\E[(R_S)_{(j:k)}\mid i\in S,\ R_{1:N}],\\
v^{(-i)}_j
&\defeq
\E[(R_{S'})_{(j:k)}\mid R_{1:N}\text{ with index }i\text{ removed}],
\end{align}
where $S'$ is a uniform size-$k$ subset of $[N]\setminus\{i\}$.  Then
\begin{equation}
\label{eq:app-batch-adv-def}
a_{i,j}\defeq q_{i,j}-v^{(-i)}_j.
\end{equation}
The formulas below are written in sorted-rank space and then mapped back to original indices.

\subsection{Include-one expectation}
\label{app:batch-include-one}

Let $t$ be the sorted rank of the datapoint forced into the subset.  Conditioning on including rank $t$ means $R_{(t:N)}$ is forced into the size-$k$ subset and the remaining $k-1$ indices are chosen uniformly from the other $N-1$ ranks.

\begin{lemma}[Include-one weights]
\label{lem:app-include-weights}
For fixed rank $j$ and included sorted rank $t$,
\[
\Prob((R_S)_{(j:k)} = R_{(m:N)} \mid t\in S,R_{1:N})
=
\begin{cases}
\dfrac{\binom{m-1}{j-1}\binom{N-m-1}{k-j-1}}{\binom{N-1}{k-1}}, & m<t,\\[1.0ex]
\dfrac{\binom{t-1}{j-1}\binom{N-t}{k-j}}{\binom{N-1}{k-1}}, & m=t,\\[1.0ex]
\dfrac{\binom{m-2}{j-2}\binom{N-m}{k-j}}{\binom{N-1}{k-1}}, & m>t.
\end{cases}
\]
\end{lemma}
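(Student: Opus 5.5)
Conditioning on $t\in S$ makes the other $k-1$ members of $S$ a uniformly random $(k-1)$-subset of the remaining $N-1$ sorted positions $[N]\setminus\{t\}$. There are $\binom{N-1}{k-1}$ such subsets, all equally likely, which gives the common denominator. So the plan is to count, for each $m$, the subsets $S\ni t$ in which position $m$ is the $j$th smallest, using the stable total order on sorted positions so that ties need no separate treatment. In every case the event is: $m\in S$, exactly $j-1$ members of $S$ lie strictly below $m$, and exactly $k-j$ lie strictly above $m$. The count is a product of two binomials, one for the free choices below $m$ and one for the free choices above, once we account for where the forced element $t$ sits.

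\emph{Case $m=t$.} The element $m$ is already in $S$ because it is the forced one. We choose $j-1$ of the remaining $k-1$ elements from the $t-1$ positions below and $k-j$ from the $N-t$ positions above. This gives $\binom{t-1}{j-1}\binom{N-t}{k-j}$.

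\emph{Case $m<t$.} Now $m$ must be one of the $k-1$ freely chosen elements. The forced element $t$ lies above $m$ and already uses one of the $k-j$ slots above. So we choose $j-1$ elements from the $m-1$ positions below $m$, and $k-j-1$ elements from the positions above $m$ other than $t$, of which there are $N-m-1$. This gives $\binom{m-1}{j-1}\binom{N-m-1}{k-j-1}$.

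\emph{Case $m>t$.} This is symmetric. The forced element $t$ fills one of the $j-1$ slots below $m$, leaving $j-2$ to choose from the $m-2$ positions below $m$ other than $t$. We choose $k-j$ from the $N-m$ positions above. This gives $\binom{m-2}{j-2}\binom{N-m}{k-j}$.

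Dividing each count by $\binom{N-1}{k-1}$ yields the stated probabilities. The out-of-range convention makes the edge cases come out correctly, for instance $j=k$ with $m<t$, or $j=1$ with $m>t$, which both give zero.

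The main step needing care is the bookkeeping of the forced element: it must be placed on the correct side of $m$ and removed from the pool of free positions, which produces the shifts $N-m-1$ and $m-2$. As a sanity check, I would verify that the weights sum to $1$ over $m$ for each fixed $j$. One could also check that averaging over $t$, weighted by $\binom{N-1}{k-1}/\binom{N}{k}=k/N$, reproduces the relation between these weights and \eqref{eq:app-W-batch}, but that is not needed for the proof.
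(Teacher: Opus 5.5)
Your proposal is correct and follows the paper's proof: condition on $t\in S$ to get the denominator $\binom{N-1}{k-1}$, then count the subsets in which position $m$ is the $j$th smallest. The three cases handle the forced element $t$ exactly as the paper does: above $m$ (one fewer upper slot and upper position), at $m$, or below $m$ (one fewer lower slot and lower position). Your remarks on the out-of-range binomial convention and on stable ordering for ties match the paper's conventions.
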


\begin{proof}
We count size-$k$ subsets containing rank $t$ for which rank $m$ is the $j$th order statistic.  If $m<t$, then $m$ must be included, $j-1$ selected ranks come from the $m-1$ lower positions, and the remaining $k-j-1$ ranks come from positions above $m$ excluding the forced rank $t$.  If $m=t$, the forced rank is the $j$th order statistic, so choose $j-1$ lower and $k-j$ higher ranks.  If $m>t$, the forced rank is one of the $j-1$ lower ranks, so choose $j-2$ additional lower ranks and $k-j$ higher ranks.  Divide each count by the number $\binom{N-1}{k-1}$ of size-$k$ subsets containing $t$.
\end{proof}

Define the three precomputable matrices
\begin{align}
\label{eq:app-ABC-def}
A_{m,j}&\defeq
\frac{\binom{m-1}{j-1}\binom{N-m-1}{k-j-1}}{\binom{N-1}{k-1}},\\
B_{m,j}&\defeq
\frac{\binom{m-1}{j-1}\binom{N-m}{k-j}}{\binom{N-1}{k-1}},\\
C_{m,j}&\defeq
\frac{\binom{m-2}{j-2}\binom{N-m}{k-j}}{\binom{N-1}{k-1}}.
\end{align}
Then the include-one expectation for included sorted rank $t$ is
\begin{equation}
\label{eq:app-include-sum}
q_{t,j}
=
\sum_{m<t} R_{(m:N)}A_{m,j}
+R_{(t:N)}B_{t,j}
+\sum_{m>t} R_{(m:N)}C_{m,j}.
\end{equation}

\subsection{Leave-one-out expectation}
\label{app:batch-loo}

Remove sorted rank $t$ and sample a uniform size-$k$ subset from the remaining $N-1$ ranks.  Let
\begin{equation}
\label{eq:app-loo-weights}
W^-_{\ell,j}
\defeq
\frac{\binom{\ell-1}{j-1}\binom{(N-1)-\ell}{k-j}}{\binom{N-1}{k}},
\qquad \ell=1,\ldots,N-1.
\end{equation}
After removing rank $t$, the reduced sorted list satisfies
\[
R^{(-t)}_{(\ell:N-1)}=
\begin{cases}
R_{(\ell:N)}, & \ell<t,\\
R_{(\ell+1:N)}, & \ell\ge t.
\end{cases}
\]
Therefore
\begin{equation}
\label{eq:app-loo-sum}
v^{(-t)}_j
=
\sum_{m<t} R_{(m:N)}W^-_{m,j}
+
\sum_{m>t} R_{(m:N)}W^-_{m-1,j}.
\end{equation}

\subsection{Prefix/suffix implementation}
\label{app:batch-prefix-suffix}

Equations \eqref{eq:app-include-sum} and \eqref{eq:app-loo-sum} can be evaluated for all $t$ and $j$ in $O(Nk)$ time after sorting.

For include-one computation, let $M_R\defeq\mathbf{R}_{(\cdot)}\mathbf{1}_k^\top\in\R^{N\times k}$ and define
\[
P_A\defeq M_R\odot A,
\qquad
P_C\defeq M_R\odot C,
\]
where $\odot$ is the Hadamard product.  With cumulative sums down rows and row $0$ equal to zero,
\[
\sum_{m<t} R_{(m:N)}A_{m,:}
=
\operatorname{cumsum}(P_A)_{t-1,:},
\qquad
\sum_{m>t} R_{(m:N)}C_{m,:}
=
\operatorname{cumsum}(P_C)_{N,:}-\operatorname{cumsum}(P_C)_{t,:}.
\]
Thus all $q_{t,:}$ are obtained by two cumulative sums and rowwise additions.

For leave-one-out computation, let $u^-=(R_{(1:N)},\ldots,R_{(N-1:N)})^\top$ and $u^+=(R_{(2:N)},\ldots,R_{(N:N)})^\top$.  Form
\[
P_1\defeq u^-\mathbf{1}_k^\top\odot W^-,
\qquad
P_2\defeq u^+\mathbf{1}_k^\top\odot W^-.
\]
Then
\begin{equation}
\begin{aligned}
\sum_{m<t} R_{(m:N)}W^-_{m,:}
&=
\operatorname{cumsum}(P_1)_{t-1,:},\\
\sum_{m>t} R_{(m:N)}W^-_{m-1,:}
&=
\operatorname{cumsum}(P_2)_{N-1,:}-\operatorname{cumsum}(P_2)_{t-1,:}.
\end{aligned}
\end{equation}
This gives $v^{(-t)}$ for all ranks $t$ in $O(Nk)$ time.  Finally,
\begin{equation}
\label{eq:app-batch-adv}
a_{t,j}=q_{t,j}-v^{(-t)}_j,
\qquad
a_t^{(\alpha)}=\sum_{j=1}^k\alpha_j a_{t,j},
\end{equation}
and the inverse sorting permutation maps $a_{t,j}$ back to $a_{i,j}$.

\section{Advantage equivalence proof}
\label{app:advantage-equivalence-proof}

Let $I$ be uniformly random on $[N]$, independent of the dataset.  Fix an arm $b$.  The exact known-$(\bar R,p)$ advantage is $\bar a_{b,j}=\bar q_{b,j}-\bar v_j$.

\begin{theorem}[Batch advantage equivalence]
\label{thm:app-adv-equivalence}
Assume $1\le k<N$.  For every rank $j\in\{1,\ldots,k\}$,
\[
\E[a_{I,j}\mid A^{(I)}=b]=\bar a_{b,j}.
\]
Consequently, for any fixed $\alpha\in\R^k$,
\[
\E[a_I^{(\alpha)}\mid A^{(I)}=b]=\bar a_b^{(\alpha)}.
\]
\end{theorem}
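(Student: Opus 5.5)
Split $a_{I,j}=q_{I,j}-v^{(-I)}_j$ and treat the two terms separately, using linearity of conditional expectation. The claim for general $\alpha$ then follows from the rankwise claim by linearity, since $a_I^{(\alpha)}=\sum_j\alpha_j a_{I,j}$ and $\bar a_b^{(\alpha)}=\sum_j\alpha_j\bar a_{b,j}$.

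\emph{Reduction to a fixed index.} Because $I$ is uniform on $[N]$ and independent of the i.i.d.\ batch, and all batch quantities are symmetric under permutations of indices, conditioning on $A^{(I)}=b$ is equivalent to fixing $I=1$ and conditioning on $A^{(1)}=b$. More precisely, for each $i$ we have $\E[a_{i,j}\mid A^{(i)}=b]=\E[a_{1,j}\mid A^{(1)}=b]$ by exchangeability. Since $\Prob(A^{(I)}=b)=p_b>0$, averaging over $I$ gives the same value. So it suffices to compute $\E[q_{1,j}\mid A^{(1)}=b]$ and $\E[v^{(-1)}_j\mid A^{(1)}=b]$.

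\emph{Leave-one-out term.} The quantity $v^{(-1)}_j$ is a function only of $A^{(2)},\dots,A^{(N)}$, which are independent of $A^{(1)}$. Hence conditioning on $A^{(1)}=b$ does nothing, and $\E[v^{(-1)}_j\mid A^{(1)}=b]=\E[v^{(-1)}_j]$. Now $v^{(-1)}_j$ is exactly the U-statistic value estimator on the $N-1$ i.i.d.\ samples, and $k\le N-1$ holds because $k<N$. Theorem~\ref{thm:app-Ustat-unbiased} applied with $N-1$ in place of $N$ therefore gives $\bar v_j$.

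\emph{Include-one term.} Write $q_{1,j}=\E[(R_S)_{(j:k)}\mid 1\in S,R_{1:N}]$, that is, $S=\{1\}\cup S_0$ with $S_0$ a uniform $(k-1)$-subset of $\{2,\dots,N\}$ drawn independently. Taking the expectation conditional on $A^{(1)}=b$ and using the tower property gives $\E[h(\bar R_b,R_{S_0})]$, where $h$ is the $j$th order statistic of the multiset $\{\bar R_b\}\cup R_{S_0}$. For fixed first argument, $h$ is symmetric in the remaining $k-1$ arguments. Lemma~\ref{lem:app-subsample-iid}, applied to $Z=(R_2,\dots,R_N)$ with $s=k-1$, shows that $R_{S_0}$ can be replaced by $k-1$ fresh i.i.d.\ rewards. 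The resulting expectation is exactly $\E[R_{(j:k)}\mid A_1=b]=\bar q_{b,j}$.

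\emph{Main obstacle.} The delicate step is applying Lemma~\ref{lem:app-subsample-iid} under the conditioning. This requires making precise that the randomness of $S_0$ and of the other samples is independent of $A^{(1)}$, so that the conditional law of $R_{S_0}$ given $A^{(1)}=b$ is still i.i.d.\ from $p$. Ties also need a brief remark: with stable ordering the order-statistic value is still a symmetric function of the rewards, so the argument goes through unchanged. The edge case $k=1$ is trivial, since then $q_{1,1}=R_1$ and $S_0=\emptyset$. Subtracting the two terms gives $\bar q_{b,j}-\bar v_j=\bar a_{b,j}$.
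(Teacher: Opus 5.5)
Your proof is correct and follows essentially the same route as the paper: you split $a_{I,j}$ into the include-one and leave-one-out terms. You then obtain $\bar q_{b,j}$ by applying Lemma~\ref{lem:app-subsample-iid} to the $(k-1)$-subset of the other $N-1$ rewards, obtain $\bar v_j$ from Theorem~\ref{thm:app-Ustat-unbiased} with $N-1$ samples, and conclude by linearity. Your exchangeability reduction to $I=1$ simply makes explicit the independence of $I$ from the batch, which the paper uses implicitly.
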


\begin{proof}
We prove the include-one and leave-one-out identities separately and subtract.

\paragraph{Include-one term.}
Condition on $A^{(I)}=b$.  Then $R_I=\bar R_b$ is fixed, and the remaining $N-1$ batch rewards are still i.i.d. with the marginal law of $R=\bar R_A$ under $A\sim p$.  Given the dataset, $q_{I,j}$ is the expected $j$th order statistic of a size-$k$ subset conditioned to include index $I$.  Equivalently, it is the expected order statistic of the multiset consisting of the fixed reward $\bar R_b$ and a uniformly chosen $(k-1)$-subset of the remaining $N-1$ values.  By Lemma~\ref{lem:app-subsample-iid}, that random subset has the same unconditional distribution as $k-1$ fresh i.i.d. draws from the reward law.  Therefore
\[
\E[q_{I,j}\mid A^{(I)}=b]
=
\E[R_{(j:k)}\mid A_1=b]
=
\bar q_{b,j}.
\]

\paragraph{Leave-one-out term.}
Again condition on $A^{(I)}=b$.  Removing index $I$ leaves $N-1$ i.i.d. rewards with the law of $R$.  The leave-one-out quantity $v^{(-I)}_j$ is the U-statistic value estimator computed on those $N-1$ samples.  By Theorem~\ref{thm:app-Ustat-unbiased}, applied with $N-1$ samples,
\[
\E[v^{(-I)}_j\mid A^{(I)}=b]=\bar v_j.
\]

\paragraph{Subtract.}
Since $a_{I,j}=q_{I,j}-v^{(-I)}_j$,
\[
\E[a_{I,j}\mid A^{(I)}=b]
=
\bar q_{b,j}-\bar v_j
=
\bar a_{b,j}.
\]
Linearity gives the L-statistic statement.
\end{proof}

\section{Gradient estimators from batch quantities}
\label{app:gradient-estimators}

\subsection{Likelihood-ratio estimator}
\label{app:lr-estimator}

For $1\le k<N$, the sampling-regime likelihood-ratio estimator is
\begin{equation}
\label{eq:app-lr-estimator}
\estmr{g}_{\mathrm{LR\mbox{-}OG},N,\alpha}
=
\frac{k}{N}\sum_{i=1}^N
a_i^{(\alpha)}\nabla_\theta\log p_\theta(x_i).
\end{equation}
In the finite-action known-reward setting,
\begin{align}
\E[\estmr{g}_{\mathrm{LR\mbox{-}OG},N,\alpha}]
&=
k\,\E\!\left[
a_I^{(\alpha)}
\nabla_\theta\log p_\theta(A^{(I)})
\right]\\
&=
k\sum_{b=1}^m
p_\theta(b)
\E[a_I^{(\alpha)}\mid A^{(I)}=b]
\nabla_\theta\log p_\theta(b)\\
&=
k\sum_{b=1}^m
p_\theta(b)\bar a_b^{(\alpha)}
\nabla_\theta\log p_\theta(b)\\
&=
\nabla_\theta\bar v^\alpha(\theta),
\end{align}
where the last equality is \eqref{eq:app-known-lr-gradient}.  Thus the LR estimator uses the regular batch quantity $a_i^{(\alpha)}$; barred quantities appear only in the proof and in the exact known-regime formula.

\subsection{Reparameterization estimator}
\label{app:rp-estimator}

Suppose $x_i=T_\theta(\eps_i)$ with $\eps_i$ independent of $\theta$ and $R_i(\theta)=R(T_\theta(\eps_i))$ differentiable.  The reparameterization estimator differentiates the batch value
\begin{equation}
\label{eq:app-rp-estimator}
\estmr{g}_{\mathrm{RP\mbox{-}OG},N,\alpha}
=
\nabla_\theta v^\alpha(\theta),
\qquad
v^\alpha(\theta)=\mathbf{R}_{(\cdot)}(\theta)^\top W\alpha.
\end{equation}
For almost surely distinct rewards, the sorting permutation is locally constant, so
\begin{equation}
\label{eq:app-rp-sorted-gradient}
\estmr{g}_{\mathrm{RP\mbox{-}OG},N,\alpha}
=
\sum_{m=1}^N (W\alpha)_m\,\nabla_\theta R_{(m:N)}(\theta).
\end{equation}
With ties, one may use a stable-sort subgradient or a differentiable sorting relaxation.  Under dominated convergence,
\[
\E[\estmr{g}_{\mathrm{RP\mbox{-}OG},N,\alpha}]
=
\nabla_\theta\E[v^\alpha(\theta)]
=
\nabla_\theta\bar v^\alpha(\theta).
\]

\section{Computation, regularity, and ties}
\label{app:computation-regularity-ties}

\subsection{Collapsed L-statistic computation}
\label{app:collapsed-alpha-computation}

The prefix/suffix formulas in \app{app:batch-prefix-suffix} compute every rank-specific quantity $q_{t,j}$, $v_j^{(-t)}$, and $a_{t,j}$ in $O(Nk)$ time after sorting.  Often the algorithm only needs the L-statistic combination $a_t^{(\alpha)}=\sum_j\alpha_j a_{t,j}$.  In that case the rank dimension can be collapsed before applying the weights to a batch.

Define the collapsed vectors
\[
A_m^{(\alpha)}\defeq\sum_{j=1}^k\alpha_j A_{m,j},
\quad
B_m^{(\alpha)}\defeq\sum_{j=1}^k\alpha_j B_{m,j},
\quad
C_m^{(\alpha)}\defeq\sum_{j=1}^k\alpha_j C_{m,j},
\quad
(W^-)_\ell^{(\alpha)}\defeq\sum_{j=1}^k\alpha_j W^-_{\ell,j}.
\]
Then, in sorted-rank space,
\[
q_t^{(\alpha)}
=
\sum_{m<t}R_{(m:N)}A_m^{(\alpha)}
+
R_{(t:N)}B_t^{(\alpha)}
+
\sum_{m>t}R_{(m:N)}C_m^{(\alpha)},
\]
and
\[
(v^{(-t)})^{(\alpha)}
=
\sum_{m<t}R_{(m:N)}(W^-)_m^{(\alpha)}
+
\sum_{m>t}R_{(m:N)}(W^-)_{m-1}^{(\alpha)}.
\]
Thus all $a_t^{(\alpha)}=q_t^{(\alpha)}-(v^{(-t)})^{(\alpha)}$ can be computed with a constant number of prefix/suffix scans, i.e., in $O(N)$ time after sorting.  For fixed $N$, $k$, and $\alpha$, the collapsed weights can be precomputed once, so the per-minibatch cost is dominated by sorting and is $O(N\log N)$.  If one needs all rankwise advantages or changes $\alpha$ online, the relevant $O(Nk)$ precomputation or rankwise computation should be counted.

The same collapse applies to the reparameterization value:
\[
v^\alpha=\mathbf{R}_{(\cdot)}^\top W\alpha.
\]
For fixed $W\alpha$, evaluating $v^\alpha$ and its pathwise gradient after sorting is linear in $N$.

\subsection{Continuous rewards and ties}
\label{app:continuous-rewards-ties}

The finite-action deterministic-reward setting is useful because it gives exact barred quantities and a clean proof target for the LR estimator.  The sampling-regime definitions of $v_j$, $q_{i,j}$, $v_j^{(-i)}$, and $a_{i,j}$ are nevertheless purely batch-level definitions and remain well-defined for continuous rewards, trajectories, completions, or latent draws.

For continuous reward distributions, ties occur with probability zero under mild non-atomicity assumptions, and the sorted-reward formulas apply directly.  With atoms or equal numerical rewards, we use a stable total ordering of batch positions and do not collapse equal values.  The counting formulas in \app{app:batch-sort-weights}--\app{app:batch-loo} count sorted positions rather than distinct reward values, so the resulting expectations remain unchanged when tied positions have equal rewards.

For the likelihood-ratio estimator, the appendix proof states the finite-action version:
\[
\E[\estmr{g}_{\mathrm{LR\mbox{-}OG},N,\alpha}]
=
\nabla_\theta\bar v^\alpha(\theta).
\]
A more general trajectory-level statement follows the same score-function argument when the support is common in $\theta$, differentiation can be interchanged with integration, and regular conditional versions of the include-one values exist.  For the reparameterization estimator, the condition used in \app{app:rp-estimator} is the standard pathwise one: $R(T_\theta(\eps))$ is differentiable and dominated convergence permits exchanging $\nabla_\theta$ and $\E$.

\FloatBarrier
\section{Additional Order-Statistic Weight Schemes}
\label{app:ordergrad-alpha-schemes}

Here we visualize additional order-statistics weights used for
computing $v^\alpha$.  In this appendix only, plot labels of the form \texttt{Rank r} count ranks from the top: Rank 1 denotes the maximum, whereas the algebraic order-statistic index $j$ in the main text counts from the bottom, so $j=1$ denotes the minimum.  Weights near the right side of a plot emphasize
larger values, while weights near the left side emphasize smaller
values.  Some choices of $\alpha$, such as the Gini mean difference,
use signed weights to compare the two tails rather than to average a
subset of order statistics.

\begin{table}[H]
\centering
\caption{Summary of the order-statistic weight schemes visualized in this appendix.}
\label{tab:ordergrad-alpha-schemes}
\resizebox{0.95\linewidth}{!}{    \begin{tabular}{@{}p{0.22\linewidth}p{0.25\linewidth}p{0.46\linewidth}@{}}
        \toprule
        Scheme & Plot label / parameterization & Meaning \\
        \midrule
        Rank-$r$ & \texttt{Rank r} & Emphasizes the $r$th largest value among $k$ sampled values.  Rank 1 is the maximum; rank $k$ is the minimum. \\
        ReMax & \texttt{ReMax} with parameter $k$ & The maximum-of-$k$ objective, i.e., the Rank-1 scheme.  Increasing $k$ moves mass toward the largest order statistics; $k=1$ recovers the ordinary mean. \\
        Quantile & \texttt{Quantile:q} & Concentrates mass around the empirical $q$-quantile, with smaller $q$ focusing on lower sorted indices and $q=0.5$ corresponding to the median. \\
        TopM & \texttt{TopM:M} & Averages the top $M$ ranks out of a size-$k$ comparison batch. \\
        BotM & \texttt{BotM:M} & Averages the bottom $M$ ranks out of a size-$k$ comparison batch. \\
        TopBot & \texttt{TopBot:M} & Places mass on both tails by averaging the corresponding TopM and BotM schemes. \\
        Median & \texttt{Median} & Focuses on the central rank(s), providing a robust alternative to the mean. \\
        Trimmed mean & \texttt{TrimM:M} & Drops the bottom $M$ and top $M$ ranks and averages the remaining middle ranks. \\
        Winsorized mean & \texttt{WinsorizedM:M} & Clips the bottom and top $M$ ranks to the nearest retained ranks before averaging, reducing tail sensitivity without discarding the tails entirely. \\
        Gini mean difference & \texttt{GiniMeanDifference} & Uses signed weights to contrast high and low order statistics, emphasizing distributional spread rather than a location statistic. \\
        \bottomrule
    \end{tabular}
}
\end{table}

\providecommand{\ordergradweightfigwidth}{0.86\linewidth}

\begin{figure}[t]
    \centering
    \includegraphics[width=\ordergradweightfigwidth]{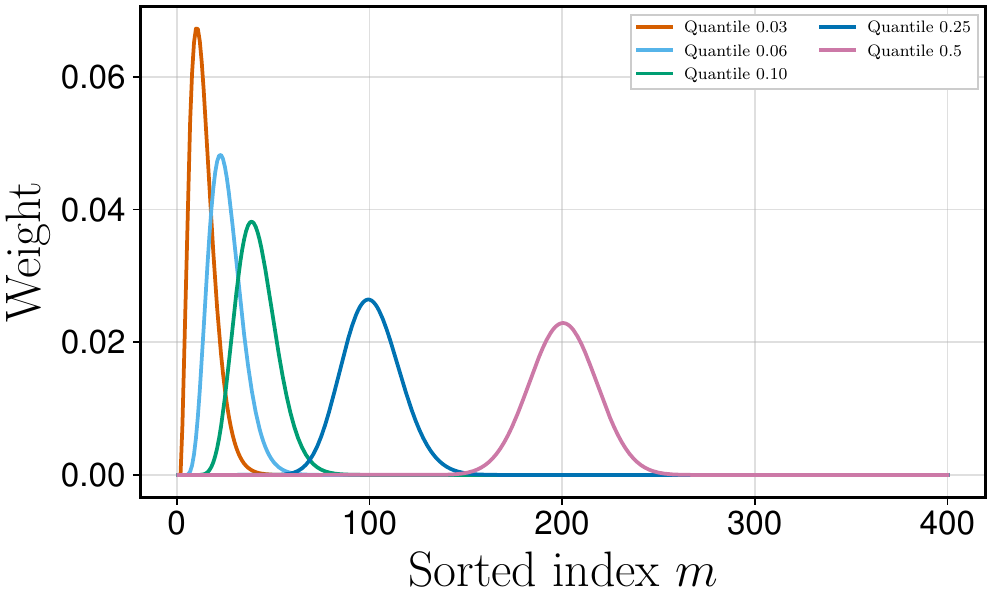}
    \caption{Quantile weight profiles for $q \in \{0.03,0.06,0.10,0.25,0.5\}$ with $N=400$ and comparison size $k=100$.  Smaller quantiles place mass on the lower tail of the sorted batch, while the median profile is centered near $m=N/2$.}
    \label{fig:ordergrad-weights-quantiles}
\end{figure}

\begin{figure}[t]
    \centering
    \includegraphics[width=\ordergradweightfigwidth]{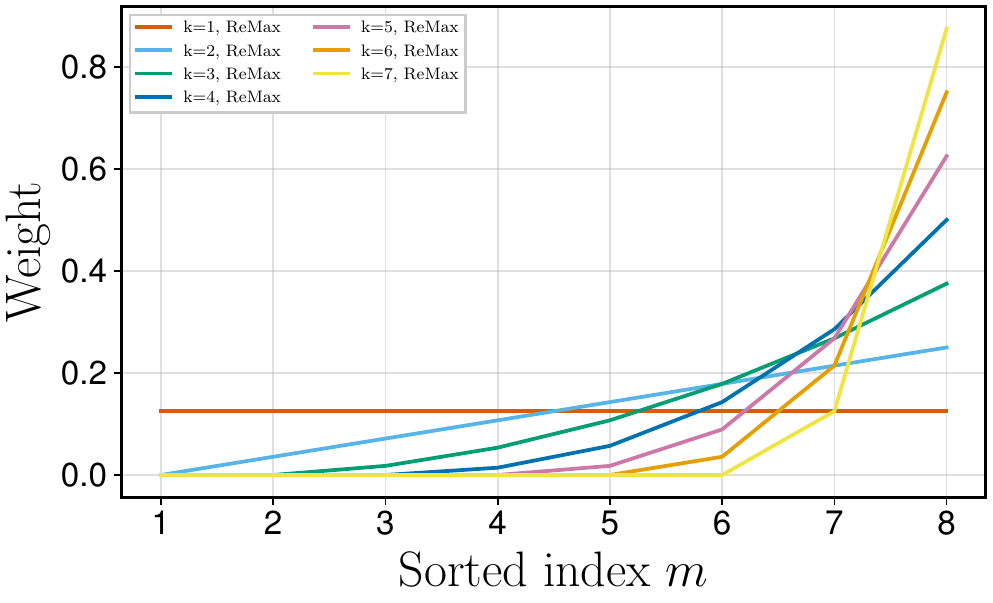}
    \caption{ReMax, or maximum-of-$k$, weight profiles for $N=8$ and $k \in \{1,\ldots,7\}$.  The $k=1$ curve is uniform, corresponding to the ordinary mean, while larger $k$ increasingly concentrates weight on the largest sorted values.}
    \label{fig:ordergrad-weights-remax}
\end{figure}

\begin{figure}[t]
    \centering
    \includegraphics[width=\ordergradweightfigwidth]{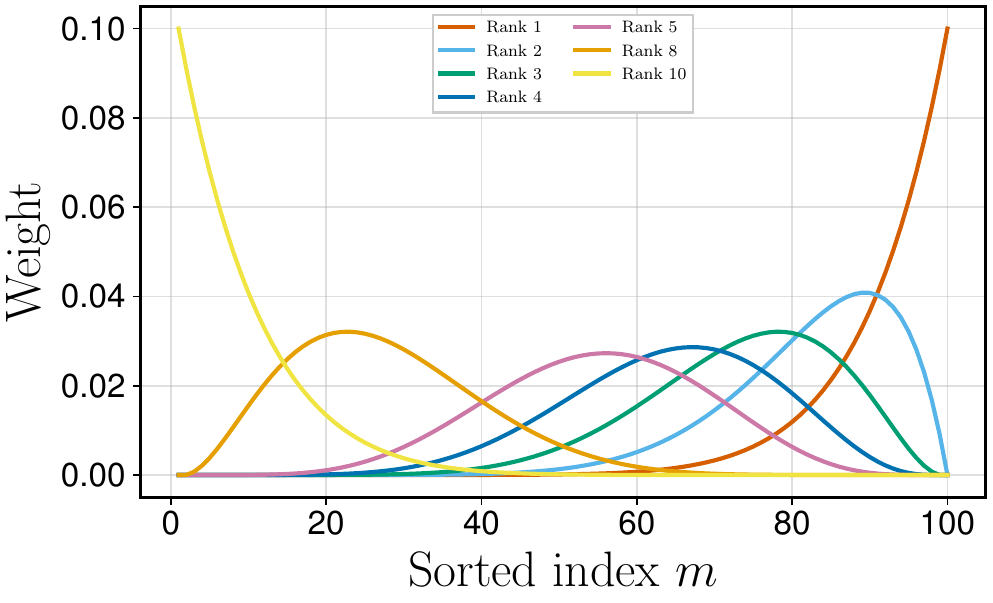}
    \caption{Rank-weight profiles for a comparison batch of size $k=10$ with $N=100$.  Rank 1 targets the maximum of the comparison batch; increasing the rank moves the mass toward lower order statistics, with Rank 10 targeting the minimum.}
    \label{fig:ordergrad-weights-ranks-k10}
\end{figure}

\begin{figure}[t]
    \centering
    \includegraphics[width=\ordergradweightfigwidth]{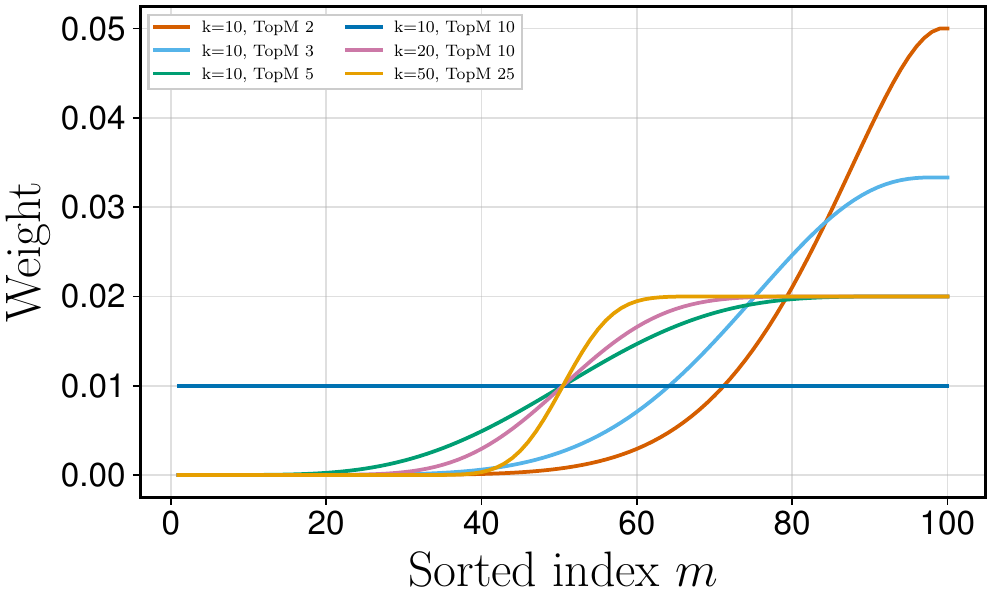}
    \caption{TopM profiles for several choices of $M$ and $k$ with $N=100$.  These schemes interpolate between a strongly top-focused objective and the ordinary mean: when $M=k$, all ranks are averaged and the resulting profile is uniform.}
    \label{fig:ordergrad-weights-topm}
\end{figure}

\begin{figure}[t]
    \centering
    \includegraphics[width=\ordergradweightfigwidth]{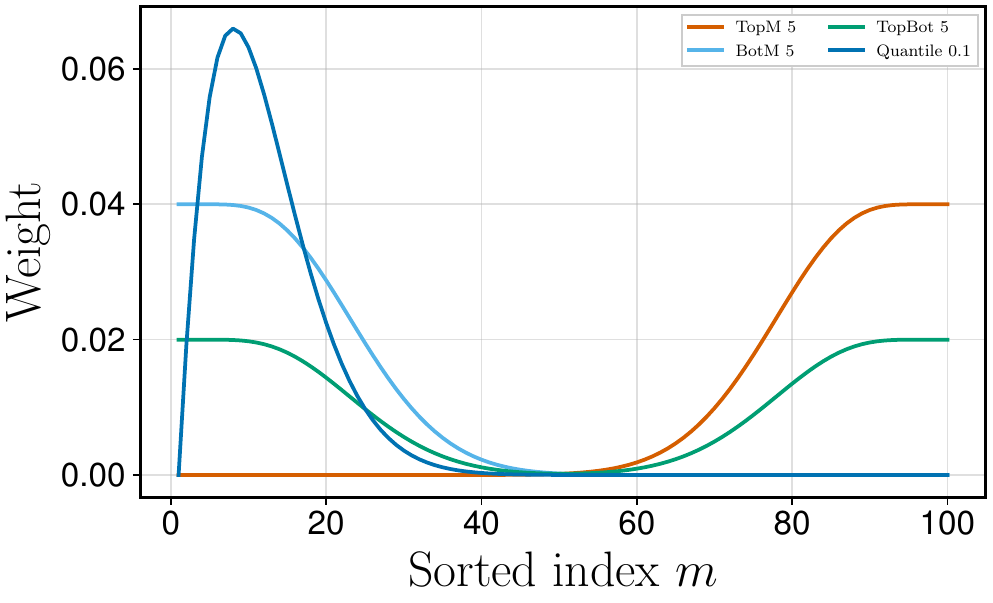}
    \caption{Tail- and quantile-focused schemes for $N=100$ and $k=20$.  TopM emphasizes high values, BotM emphasizes low values, TopBot places mass on both tails, and the quantile scheme concentrates around the specified lower quantile.}
    \label{fig:ordergrad-weights-various}
\end{figure}

\begin{figure}[t]
    \centering
    \includegraphics[width=\ordergradweightfigwidth]{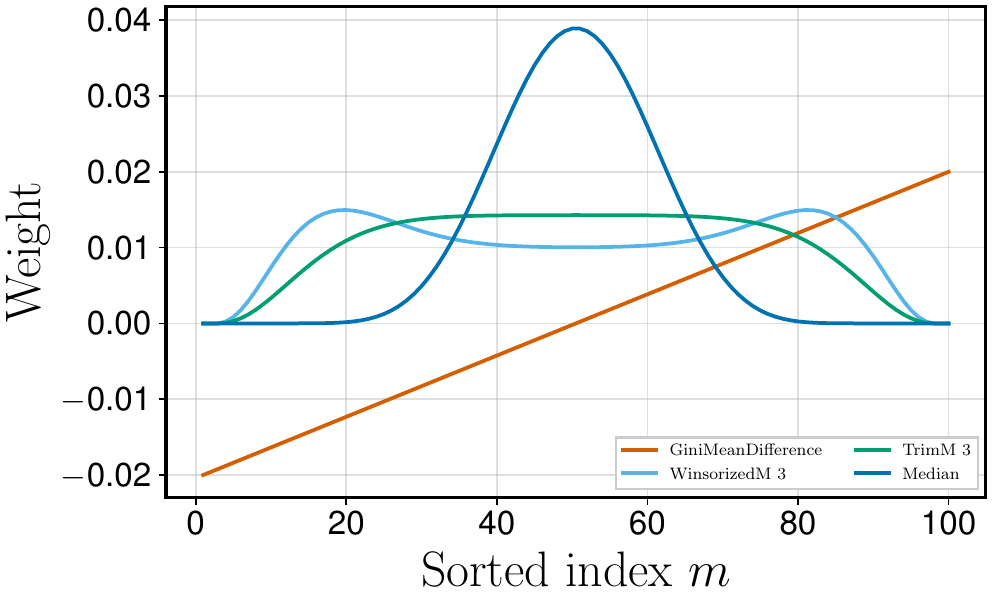}
    \caption{Robust and signed schemes for $N=100$ and $k=20$.  The median focuses on the center, the trimmed and winsorized means reduce sensitivity to extremes, and the Gini mean difference uses signed weights to contrast the upper and lower tails.}
    \label{fig:ordergrad-weights-various-robust}
\end{figure}

\FloatBarrier

\section{Additional Toy Illustrative Experiments}

\subsection{Illustrative Experiment on Portfolio Management using CVaR}

\paragraph{High-yield tail-risk example.}
We include a small synthetic trading example to make the effect of lower-tail
training visually explicit.  The experiment uses the static
\texttt{synthetic\_tailrisk} environment.  Each episode contains three risky
assets and a cash asset.  The risky assets are named
\[
    \texttt{high\_yield\_tailrisk},\qquad
    \texttt{defensive\_hedge},\qquad
    \texttt{balanced}.
\]
The cash asset has zero return.  Its portfolio weight is capped at $10\%$, so
that a lower-tail objective cannot solve the problem by moving entirely to cash.
Policies are long-only and fully invested:
\[
    w_{t,a}\geq 0,\qquad
    \sum_a w_{t,a}=1,\qquad
    w_{t,\mathrm{cash}}\leq 0.10.
\]

The risky-asset returns are generated by a simple one-factor negative-skew
model.  In normal periods,
\[
    r_{t,a}
    =
    \frac{\mu_a}{252}
    + b_a f_t
    + \frac{\sigma_a}{\sqrt{252}}\epsilon_{t,a},
    \qquad
    f_t\sim \mathcal{N}(0,0.004^2),
    \qquad
    \epsilon_{t,a}\sim \mathcal{N}(0,1),
\]
with annualized drift parameters
\[
    \mu=(0.70,\;0.030,\;0.095),
\]
annualized volatility parameters
\[
    \sigma=(0.11,\;0.065,\;0.075),
\]
and factor loadings
\[
    b=(0.80,\;-0.10,\;0.25).
\]
Thus, the first asset has very high normal-period drift and positive factor
exposure, the second asset is a defensive hedge with negative factor loading,
and the third asset has intermediate behavior.

With probability $p$ per episode, an unobserved disaster occurs at a random
time.  If an event occurs, the event day is sampled uniformly from the eligible
post-lookback portion of the simulated tensor.  In the reported horizons this
corresponds to total-index days $20$--$97$ in the $100$-day training tensor and
days $36$--$177$ in the $180$-day deployment tensor, where the first $20$ days
are used as the lookback window.  At the event, the high-yield asset receives a
large negative jump,
\[
    J\sim \mathrm{Unif}(0.50,0.70),
    \qquad
    r_{\tau,\mathrm{tail}}\leftarrow r_{\tau,\mathrm{tail}}-J.
\]
The hedge receives a positive jump,
\[
    H\sim \mathrm{Unif}(0.12,0.26),
    \qquad
    r_{\tau,\mathrm{hedge}}\leftarrow r_{\tau,\mathrm{hedge}}+H,
\]
and the balanced asset receives a smaller loss,
\[
    B\sim \mathrm{Unif}(0.015,0.050),
    \qquad
    r_{\tau,\mathrm{balanced}}\leftarrow r_{\tau,\mathrm{balanced}}-B.
\]
Two aftershock days follow when they remain inside the episode.  On the next two
days, with fractions $c_1=0.28$ and $c_2=0.14$, the high-yield asset loses
$c_\ell J$, the hedge gains $0.30c_\ell J$, and the balanced asset loses
$0.07c_\ell J$.  All simple returns are clipped to the interval
$[-0.82,0.35]$.

The training distribution uses
\[
    p_{\mathrm{train}}=0.10,
\]
whereas the main deployment distribution used for
Figure~\ref{fig:tailrisk-toy} and Table~\ref{tab:tailrisk-toy-metrics} uses
\[
    p_{\mathrm{test}}=0.35.
\]
The launch script also evaluates a post-training stress grid
\[
    p\in\{0.00,0.10,0.20,0.35,0.50,0.65\},
\]
but the compact publication figure and table report the main deployment setting
$p_{\mathrm{test}}=0.35$.  No contiguous crisis block is used in this static
tail-risk figure; the deployment shift is the higher per-episode disaster
probability.

\paragraph{Policy, information set, and wealth dynamics.}
The policy observes only causal features.  At each rebalance date, its input
contains the trailing mean and volatility of each risky asset over a $20$-day
lookback window, the previous portfolio weights, the most recent realized
portfolio return, an exponential moving average of portfolio return, current
drawdown, and normalized time.  The rolling means are multiplied by $252$, the
rolling volatilities by $\sqrt{252}$, and the last and exponential-moving-average
portfolio returns by $25$.  With three risky assets and cash, this gives a
$14$-dimensional feature vector:
\[
    2\times 3 \text{ asset moments}
    +4 \text{ previous weights}
    +4 \text{ scalar state variables}.
\]
The policy does not observe the disaster indicator or the future disaster time.

We use the default linear policy, so the feature vector is mapped directly to
four portfolio logits, one for each risky asset and one for cash.  The final
linear layer is initialized at zero, so training starts from equal logits rather
than from an arbitrary corner.  During training, Gaussian action noise with
standard deviation $0.35$ is added to the logits.  During deployment, the
deterministic mean logits are used.

If $\tilde w_t=\mathrm{softmax}(z_t)$ denotes the raw softmax weights, the cash
cap is implemented as
\[
    w_{t,\mathrm{cash}}=0.10\,\tilde w_{t,\mathrm{cash}},
    \qquad
    w_{t,a}
    =
    (1-w_{t,\mathrm{cash}})
    \frac{\tilde w_{t,a}}
    {\sum_{b\in\mathcal{R}}\tilde w_{t,b}},
    \qquad a\in\mathcal{R},
\]
where $\mathcal{R}$ is the set of risky assets.  Thus, the final cash weight is
always at most $10\%$, and the remaining mass is renormalized across risky
assets.

Training episodes have $80$ traded days and deployment rollouts have $160$
traded days.  The first $20$ simulated days are used only to form the initial
lookback window.  Portfolios are rebalanced every $5$ traded days.  The same
weight is held between rebalances.  Proportional transaction costs are charged
on the first day of each rebalance block with coefficient
\[
    \lambda=0.001.
\]
If $w_t$ is the new weight vector and $w_{t^-}$ is the previous weight vector,
the first day of the block uses
\[
    W_{t+1}
    =
    W_t\left(1+w_t^\top r_{t+1}
    -\lambda\|w_t-w_{t^-}\|_1\right),
\]
and subsequent days in the same block use the same portfolio weight without an
additional rebalance cost.  The episode score is terminal log wealth,
\[
    Z_i(\theta)=\log W_{T,i}(\theta).
\]

\paragraph{Methods.}
Both methods use the same simulator, policy class, optimizer, random-seed
protocol, minibatch size, and vectorized likelihood-ratio estimator.  Each
training iteration simulates a fresh minibatch of synthetic episodes.  The
optimizer is Adam with learning rate $0.003$ and gradient clipping threshold
$1.0$.  Both methods are trained for $750$ optimization steps with minibatch
size
\[
    B=256.
\]

The REINFORCE baseline optimizes mean terminal log wealth.  In the implementation
the terminal log-wealth rewards are centered and normalized within the minibatch,
\[
    A_i^{\mathrm{mean}}
    =
    \frac{Z_i-\bar Z}{\widehat{\sigma}_Z+\varepsilon},
    \qquad
    \bar Z=\frac{1}{B}\sum_{j=1}^B Z_j,
\]
and the policy-gradient loss is
\[
    -\frac{1}{B}\sum_{i=1}^B
    A_i^{\mathrm{mean}}
    \sum_t \log \pi_\theta(a_{i,t}\mid s_{i,t}).
\]
The normalization changes the scale of the gradient estimate but not the
mean-return objective being targeted.

The robust variant replaces the mean-return advantage by an OrderGrad advantage
for a lower-tail L-statistic.  Let
$Z_{(1)}\leq \cdots \leq Z_{(B)}$ denote the sorted terminal log wealths.  The
lower-tail target is
\[
    L_\alpha(Z)
    =
    \frac{1}{\lceil \alpha B\rceil}
    \sum_{j=1}^{\lceil \alpha B\rceil} Z_{(j)} .
\]
In the reported run,
\[
    \alpha=0.20,\qquad K=128,
\]
where $K$ is the OrderGrad interpolation parameter.  The implementation computes
the expected OrderGrad advantage for the statistic
\texttt{LowerTailMean:0.20} and then normalizes the resulting advantages by
their minibatch standard deviation before applying the same REINFORCE
likelihood-ratio loss.

The reported static deployment results aggregate $1280$ deterministic
deployment paths per method.  Since each seed uses $256$ deployment rollouts,
this corresponds to five training seeds.  The deployment evaluation is
deterministic: no Gaussian action noise is added to policy logits at test time.

\begin{figure}[t]
    \centering
        \includegraphics[width=0.9\textwidth]{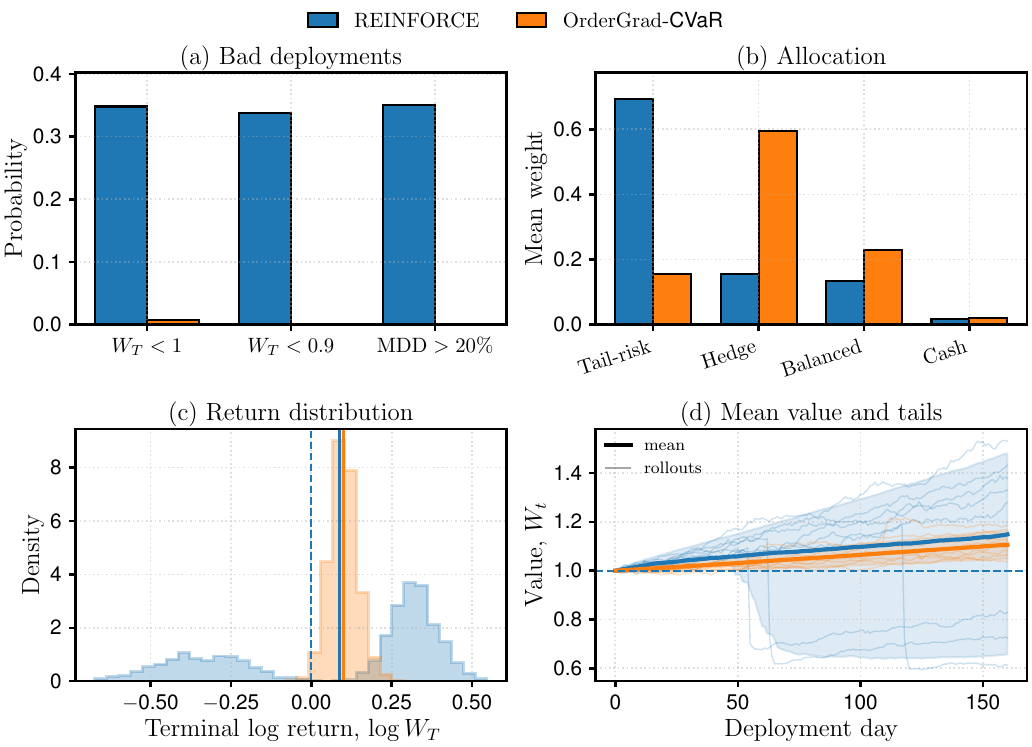}
    \caption{High-yield tail-risk trading example.  Panel (a) reports bad
    deployment probabilities: losing money, ending below $W_T<0.9$, and
    suffering a maximum drawdown larger than $20\%$ in magnitude.  Panel (b)
    reports average deployment portfolio weights.  Panel (c) shows the
    distribution of terminal log returns.  Panel (d) shows the mean deployment
    wealth, a $10$--$90\%$ band, and representative individual rollouts.
    Standard REINFORCE attains a higher mean cumulative return by concentrating
    on the high-yield tail-risk asset, but this produces a broad left tail and
    many bad deployments.  OrderGrad-CVaR reallocates toward the hedge and
    balanced assets, giving lower mean upside but substantially higher
    reliability.}
    \label{fig:tailrisk-toy}
\end{figure}

\paragraph{Result.}
Figure~\ref{fig:tailrisk-toy} shows the main effect.  Standard REINFORCE places
most of its capital in the high-yield tail-risk asset, whereas OrderGrad-CVaR
substantially reduces this exposure and reallocates toward the hedge and
balanced assets.  In the plotted run, the average high-yield exposure falls from
$70.2\%$ for REINFORCE to $16.0\%$ for OrderGrad-CVaR, while the hedge
allocation increases from $15.1\%$ to $59.2\%$.  The cash weights are small for
both methods, $1.6\%$ for REINFORCE and $2.2\%$ for OrderGrad-CVaR, so the
robust method is not winning by simply moving to cash.

This allocation shift produces a pronounced reliability difference.  REINFORCE
has the larger mean cumulative return, $14.6\%$ versus $10.7\%$ for
OrderGrad-CVaR, because it benefits from high-yield exposure in favorable
rollouts.  However, the deployment distribution is much less stable: REINFORCE
loses money in $35.3\%$ of deployment paths, falls below terminal wealth
$W_T<0.9$ in $34.2\%$ of paths, and experiences a maximum drawdown larger than
$20\%$ in magnitude in $35.4\%$ of paths.  OrderGrad-CVaR loses money in only
$0.47\%$ of paths, has no paths below $W_T<0.9$ in this run, and has no paths
exceeding the $20\%$ drawdown threshold.  The lower-tail statistics show the
same pattern: the $10$th percentile terminal log return improves from $-0.429$
for REINFORCE to $0.046$ for OrderGrad-CVaR, while mean maximum drawdown
improves from $-18.9\%$ to $-2.2\%$.  The daily $5\%$ CVaR also improves from
$-3.21\%$ to $-0.57\%$.

\begin{table}[t]
    \centering
    \caption{Deployment metrics in the high-yield tail-risk example.  Higher is
    better for mean return, profitable
    paths.  Lower is better for bad-outcome probabilities and
    drawdown severity.}
    \label{tab:tailrisk-toy-metrics}
    \begin{tabular}{lrrrrrr}
        \toprule
        Method
        & Mean return
        & Profitable
        & $W_T<0.9$
        & MDD $>20\%$
        & Mean MDD \\
        \midrule
        REINFORCE
        & $14.6\%$
        & $64.7\%$
        & $34.2\%$
        & $35.4\%$
        & $-18.9\%$ \\
        OrderGrad-CVaR
        & $10.7\%$
        & $99.5\%$
        & $0.0\%$
        & $0.0\%$
        & $-2.2\%$ \\
        \bottomrule
    \end{tabular}
\end{table}

\paragraph{Interpretation.}
The experiment is deliberately simple: it is designed to isolate the effect of
the training objective rather than to model a complete financial market.  The
result should therefore not be read as evidence that CVaR training always gives
higher average return.  Instead, it demonstrates the expected risk-sensitive
behavior in a controlled negative-skew allocation problem.  The mean-return
policy accepts the high-yield tail-risk trade because its ordinary outcomes are
attractive.  The CVaR-trained policy gives more weight to the bad episodes
observed during training and therefore sacrifices some average upside in
exchange for a substantially better lower tail.  This is the behavior desired
when the deployment criterion values reliability, drawdown control, or avoidance
of rare large losses more than mean return alone.

\FloatBarrier

\subsection{Robust regression with an order-statistic training objective}
\label{app:robust-regression}

We evaluate a synthetic robust-regression problem designed to test whether an order-statistic objective can suppress a coherent set of corrupted labels.  Clean examples are generated from
\[
    y = 2.0x -1.0 + \epsilon, \qquad \epsilon \sim \mathcal{N}(0,0.25^2),
\]
with \(x\sim\mathrm{Unif}[-3,3]\).  A fraction \(0.12\) of the training labels is replaced by samples from a wrong linear relation,
\[
    y = -1.4x +4.2 + \epsilon_\mathrm{out}, \qquad \epsilon_\mathrm{out} \sim \mathcal{N}(0,0.45^2).
\]
The model is the linear predictor \(\hat y = wx+b\).  We compare the ordinary mean reward objective, the median reward objective, and a trimmed L-statistic objective.  The per-example reward is minus squared error, so the trimmed objective emphasizes examples that are well fit by the current model and downweights both tails of the reward order statistics.  In the reported run, the optimizer uses minibatches of size \(256\), \(180\) Adam steps, and the trimmed objective \(\mathrm{TrimM}:4\).

Table~\ref{tab:robust-regression} reports clean test error averaged over \(10\) random seeds.  The best clean MSE is obtained by Median with mean final MSE \(0.0641\).  This demonstrates that the order-statistic objective can recover a predictor closer to the uncontaminated data-generating process than objectives that average over the corrupted population.

\begin{table}[t]
\centering
\caption{Robust-regression summary.  Clean test MSE and parameter errors are averaged over random seeds; SE denotes the standard error of the final clean MSE.}
\begin{tabular}{lrrrr}
\toprule
Method & Final clean MSE & SE & $|\hat w - w_\star|$ & $|\hat b - b_\star|$ \\
\midrule
Median & 0.0641 & 0.00112 & 0.0139 & 0.0189 \\
Trimmed (m=4) & 0.0656 & 0.00107 & 0.0237 & 0.0226 \\
Mean & 1.1 & 0.101 & 0.433 & 0.667 \\
\bottomrule
\end{tabular}
\label{tab:robust-regression}
\end{table}

\begin{figure}[t]
\centering
\includegraphics[width=0.9\linewidth]{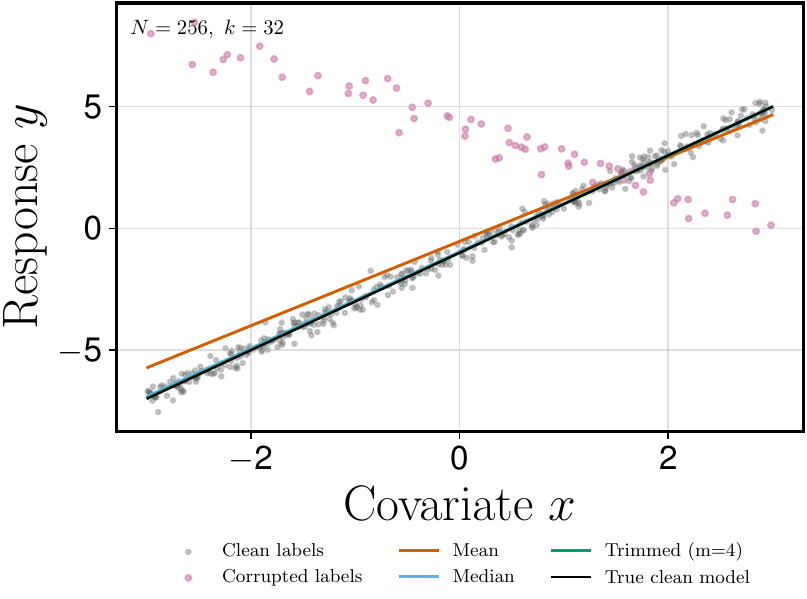}
\caption{Representative robust-regression fit panel.}
\label{fig:robust-regression-fit}
\end{figure}

 Figure~\ref{fig:robust-regression-fit} shows the fitted lines for one representative training set, illustrating that the trimmed objective follows the clean population line while the mean objective is pulled toward the corrupted population

\FloatBarrier

\section{LLM Experiments}
\label{app:llm}

In this section, we provide additional details and results for the LLM
experiments in Sec.~\ref{sec:reasoning_experiments}.

\subsection{Setting}
\label{app:llm:hparams}

All models are trained with a learning rate of $10^{-6}$, a batch size of
$1024$, and a PPO mini-batch size of $256$. For each input problem, we roll
out $8$ responses using a temperature of $1.0$. Both backbones are trained
on the MATH dataset~\citep{hendrycks2021measuring}. OrderGrad is applied
prompt-conditionally: for a fixed prompt $c$, the rollout group is treated as
samples from $p_\theta(\cdot\mid c)$ and rank advantages are computed within
that group. For all methods, the advantage is normalized by the group standard
deviation; this normalization, together with PPO clipping, is a practical
stabilizer and is not part of the exact unbiasedness statement in the theory
sections. \maxpo{} is swept over $K\in\{2,4,6\}$ and our method over
$(m,K)\in\{(1,2),(2,4),(2,6),(3,6)\}$. The prompt template used is presented below.

\begin{examplebox}{Prompt Template (Train)}
<|im_start|>system
You are a helpful assistant.<|im_end|>
<|im_start|>user
{input}
Please reason step by step, and put your final answer within \boxed{}.<|im_end|>
<|im_start|>assistant

\end{examplebox}

\begin{examplebox}{Prompt Template (Test)}
<|im_start|>system
Please reason step by step, and put your final answer within \boxed{}.<|im_end|>
<|im_start|>user
{input}<|im_end|>
<|im_start|>assistant
\end{examplebox}

\subsection{Additional Results}
\label{app:llm:additional_results}

Tables~\ref{tab:taskavg_qwen} and~\ref{tab:taskavg_qwen3} report the
task-average pass@$k$ on Qwen2.5-Math-7B and Qwen3-4B-Base, respectively.
Table~\ref{tab:perbench_pass1_pass256} reports the per-benchmark pass@$1$
and pass@$256$. Figures~\ref{fig:per_task_qwen}
and~\ref{fig:per_task_qwen3} provide the per-benchmark pass@$k$ curves.

\begin{table}[H]
\centering
\caption{Task-average pass@$k$ on Qwen2.5-Math-7B.}
\label{tab:taskavg_qwen}
\resizebox{0.95\linewidth}{!}{\begin{tabular}{lccccccccc}
\toprule
Method & $k{=}1$ & $k{=}2$ & $k{=}4$ & $k{=}8$ & $k{=}16$ & $k{=}32$ & $k{=}64$ & $k{=}128$ & $k{=}256$ \\
\midrule
base & 0.224 & 0.319 & 0.415 & 0.502 & 0.575 & 0.633 & 0.682 & 0.728 & 0.772 \\
GRPO & \textbf{0.423} & 0.485 & 0.537 & 0.585 & 0.629 & 0.669 & 0.706 & 0.742 & 0.776 \\
\maxpo{} ($K{=}2$) & 0.419 & \textbf{0.491} & \textbf{0.550} & \textbf{0.599} & 0.643 & 0.682 & 0.717 & 0.749 & 0.779 \\
\maxpo{} ($K{=}4$) & 0.388 & 0.466 & 0.531 & 0.587 & 0.635 & 0.678 & 0.719 & 0.760 & \textbf{0.802} \\
\maxpo{} ($K{=}6$) & 0.331 & 0.427 & 0.507 & 0.573 & 0.626 & 0.672 & 0.715 & 0.755 & 0.793 \\
Ours (Top$1$@$2$) & 0.422 & 0.491 & 0.547 & 0.598 & 0.642 & 0.683 & 0.720 & 0.756 & 0.792 \\
Ours (Top$2$@$4$) & 0.406 & 0.481 & 0.544 & 0.598 & \textbf{0.643} & 0.683 & 0.720 & 0.757 & 0.798 \\
Ours (Top$2$@$6$) & 0.399 & 0.476 & 0.539 & 0.594 & 0.641 & \textbf{0.683} & \textbf{0.723} & \textbf{0.762} & 0.802 \\
Ours (Top$3$@$6$) & 0.393 & 0.469 & 0.530 & 0.582 & 0.630 & 0.674 & 0.715 & 0.754 & 0.790 \\
\bottomrule
\end{tabular}}
\end{table}

\begin{table}[H]
\centering
\caption{Task-average pass@$k$ on Qwen3-4B-Base.}
\label{tab:taskavg_qwen3}
\resizebox{0.95\linewidth}{!}{\begin{tabular}{lccccccccc}
\toprule
Method & $k{=}1$ & $k{=}2$ & $k{=}4$ & $k{=}8$ & $k{=}16$ & $k{=}32$ & $k{=}64$ & $k{=}128$ & $k{=}256$ \\
\midrule
base & 0.310 & 0.399 & 0.475 & 0.540 & 0.594 & 0.643 & 0.688 & 0.729 & 0.764 \\
GRPO & 0.411 & 0.466 & 0.509 & 0.544 & 0.578 & 0.613 & 0.646 & 0.680 & 0.719 \\
\maxpo{} ($K{=}2$) & 0.403 & 0.474 & 0.531 & 0.581 & 0.628 & 0.674 & 0.714 & 0.749 & 0.781 \\
\maxpo{} ($K{=}4$) & 0.359 & 0.453 & 0.522 & 0.575 & 0.621 & 0.665 & 0.710 & 0.750 & 0.785 \\
\maxpo{} ($K{=}6$) & 0.319 & 0.422 & 0.500 & 0.560 & 0.610 & 0.655 & 0.696 & 0.734 & 0.768 \\
Ours (Top$1$@$2$) & 0.389 & 0.460 & 0.520 & 0.575 & 0.627 & 0.675 & 0.715 & 0.751 & 0.785 \\
Ours (Top$2$@$4$) & \textbf{0.415} & \textbf{0.494} & \textbf{0.555} & \textbf{0.607} & \textbf{0.656} & \textbf{0.702} & \textbf{0.742} & \textbf{0.778} & 0.810 \\
Ours (Top$2$@$6$) & 0.370 & 0.461 & 0.531 & 0.589 & 0.642 & 0.692 & 0.738 & 0.777 & \textbf{0.812} \\
Ours (Top$3$@$6$) & 0.378 & 0.475 & 0.545 & 0.601 & 0.652 & 0.698 & 0.739 & 0.772 & 0.802 \\
\bottomrule
\end{tabular}}
\end{table}

\begin{table}[H]
\centering
\caption{Per-benchmark pass@$1$ / pass@$256$ for both backbones.}
\label{tab:perbench_pass1_pass256}
\resizebox{0.95\linewidth}{!}{\setlength{\tabcolsep}{4pt}
\begin{tabular}{lcccccc}
\toprule
Method & AIME24 & AIME25 & AMC23 & MATH500 & Minerva & Avg. \\
\midrule
\multicolumn{7}{l}{\textit{Qwen2.5-Math-7B}}\\
\midrule
base & 0.118 / 0.705 & 0.051 / 0.466 & 0.365 / 0.990 & 0.468 / 0.959 & 0.117 / 0.741 & 0.224 / 0.772 \\
GRPO & 0.298 / 0.704 & 0.090 / 0.488 & 0.617 / 0.988 & 0.747 / 0.947 & \textbf{0.361} / 0.753 & \textbf{0.423} / 0.776 \\
\maxpo{} ($K{=}2$) & \textbf{0.307} / 0.723 & \textbf{0.105} / 0.465 & \textbf{0.637} / 0.980 & \textbf{0.747} / 0.958 & 0.300 / 0.771 & 0.419 / 0.779 \\
\maxpo{} ($K{=}4$) & 0.267 / 0.743 & 0.084 / \textbf{0.557} & 0.607 / 0.982 & 0.728 / 0.958 & 0.254 / 0.772 & 0.388 / \textbf{0.802} \\
\maxpo{} ($K{=}6$) & 0.208 / 0.717 & 0.075 / 0.521 & 0.537 / \textbf{0.998} & 0.645 / \textbf{0.959} & 0.193 / 0.768 & 0.331 / 0.793 \\
Ours (Top$1$@$2$) & 0.291 / \textbf{0.759} & 0.093 / 0.517 & 0.629 / 0.974 & \textbf{0.747} / 0.952 & 0.351 / 0.759 & 0.422 / 0.792 \\
Ours (Top$2$@$4$) & 0.283 / 0.758 & 0.095 / 0.521 & 0.624 / 0.983 & 0.739 / 0.957 & 0.290 / 0.772 & 0.406 / 0.798 \\
Ours (Top$2$@$6$) & 0.276 / 0.756 & 0.085 / 0.521 & 0.613 / 0.985 & 0.727 / 0.956 & 0.292 / \textbf{0.789} & 0.399 / 0.802 \\
Ours (Top$3$@$6$) & 0.268 / 0.741 & 0.089 / 0.482 & 0.607 / 0.990 & 0.723 / 0.955 & 0.278 / 0.783 & 0.393 / 0.790 \\
\midrule
\multicolumn{7}{l}{\textit{Qwen3-4B-Base}}\\
\midrule
base & 0.102 / 0.606 & 0.072 / 0.472 & 0.454 / 0.990 & 0.674 / 0.961 & 0.249 / 0.792 & 0.310 / 0.764 \\
GRPO & 0.164 / 0.604 & 0.105 / 0.377 & 0.583 / 0.966 & \textbf{0.788} / 0.933 & \textbf{0.416} / 0.713 & 0.411 / 0.719 \\
\maxpo{} ($K{=}2$) & 0.165 / 0.660 & 0.103 / 0.519 & 0.584 / 0.996 & 0.780 / 0.960 & 0.379 / 0.772 & 0.403 / 0.781 \\
\maxpo{} ($K{=}4$) & 0.134 / 0.648 & 0.122 / 0.564 & 0.504 / 0.991 & 0.713 / 0.967 & 0.319 / 0.753 & 0.359 / 0.785 \\
\maxpo{} ($K{=}6$) & 0.108 / 0.589 & 0.094 / 0.548 & 0.434 / 0.976 & 0.664 / 0.966 & 0.295 / 0.762 & 0.319 / 0.768 \\
Ours (Top$1$@$2$) & 0.148 / 0.656 & 0.092 / 0.563 & 0.571 / \textbf{0.996} & 0.771 / 0.958 & 0.364 / 0.753 & 0.389 / 0.785 \\
Ours (Top$2$@$4$) & \textbf{0.189} / \textbf{0.760} & \textbf{0.151} / 0.563 & \textbf{0.590} / 0.989 & 0.783 / \textbf{0.968} & 0.364 / 0.773 & \textbf{0.415} / 0.810 \\
Ours (Top$2$@$6$) & 0.160 / 0.747 & 0.097 / \textbf{0.574} & 0.538 / 0.992 & 0.731 / 0.967 & 0.323 / \textbf{0.778} & 0.370 / \textbf{0.812} \\
Ours (Top$3$@$6$) & 0.170 / 0.732 & 0.123 / 0.557 & 0.532 / 0.991 & 0.735 / 0.964 & 0.330 / 0.766 & 0.378 / 0.802 \\
\bottomrule
\end{tabular}}
\end{table}

\begin{table}[H]
\centering
\caption{Task-average pass@$k$ on Qwen2.5-Math-7B with a response-length reward.}
\label{tab:length_qwen}
\small
\setlength{\tabcolsep}{4pt}
\scalebox{0.9}{\begin{tabular}{lccccccccccc}
\toprule
Method & $k{=}1$ & $k{=}2$ & $k{=}4$ & $k{=}8$ & $k{=}16$ & $k{=}32$ & $k{=}64$ & $k{=}128$ & $k{=}256$ & $k{=}512$ & $k{=}1024$ \\
\midrule
Base & 0.224 & 0.319 & 0.415 & 0.502 & 0.575 & 0.633 & 0.682 & 0.728 & 0.772 & 0.817 & 0.859 \\
GRPO & 0.423 & 0.485 & 0.537 & 0.585 & 0.629 & 0.669 & 0.706 & 0.742 & 0.776 & 0.809 & 0.841 \\
Ours (Top2@4) & 0.406 & 0.481 & 0.544 & 0.598 & 0.643 & 0.683 & 0.720 & 0.757 & 0.798 & 0.843 & 0.887 \\
GRPO w/ length penalty & 0.157 & 0.191 & 0.222 & 0.249 & 0.270 & 0.289 & 0.305 & 0.318 & 0.331 & 0.345 & 0.363 \\
Ours (Top2@4, Bottom2@4) & 0.403 & 0.470 & 0.525 & 0.571 & 0.614 & 0.656 & 0.698 & 0.738 & 0.778 & 0.821 & 0.867 \\
\bottomrule
\end{tabular}}
\end{table}

\begin{figure}[H]
  \centering
  \includegraphics[width=\linewidth]{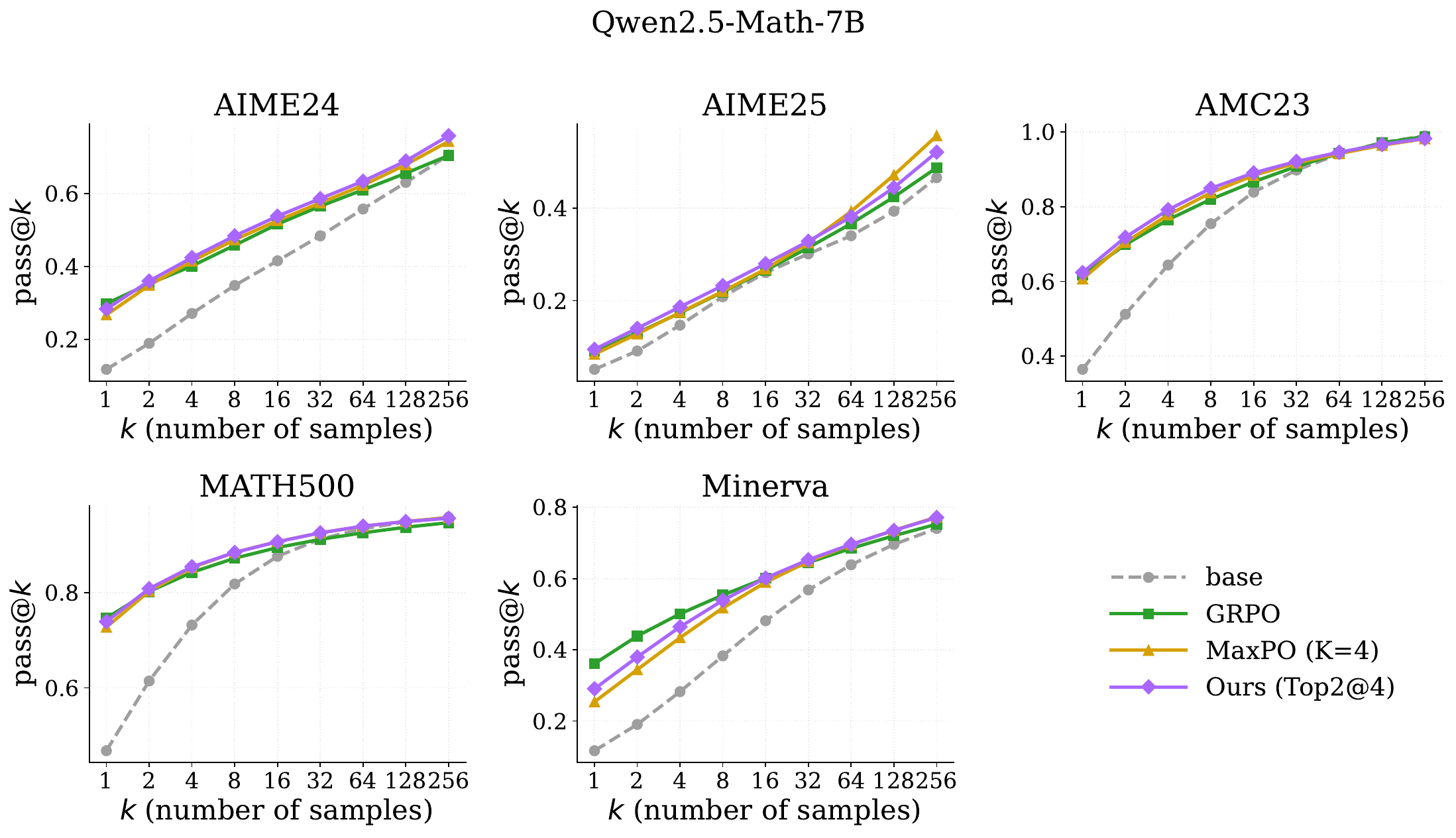}
  \caption{Per-task pass@$k$ on Qwen2.5-Math-7B.}
  \label{fig:per_task_qwen}
\end{figure}

\begin{figure}[H]
  \centering
  \includegraphics[width=\linewidth]{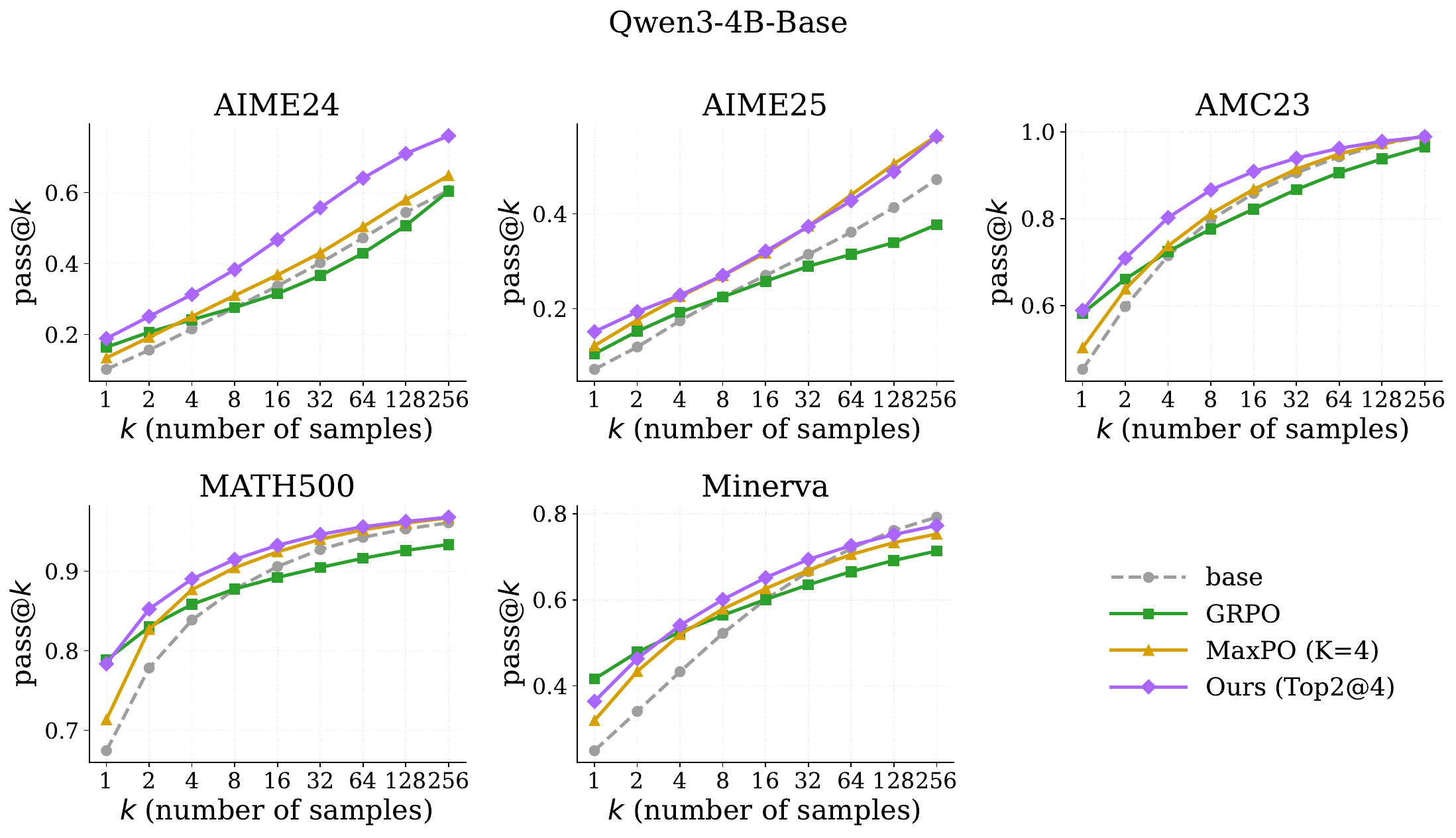}
  \caption{Per-task pass@$k$ on Qwen3-4B-Base.}
  \label{fig:per_task_qwen3}
\end{figure}

Table~\ref{tab:length_qwen} shows the pass@$k$ results for the length penalty on Qwen2.5-Math-7B. Figure~\ref{fig:length_qwen} provides the per-benchmark pass@$k$ curves.

\begin{figure}[H]
  \centering
  \includegraphics[width=\linewidth]{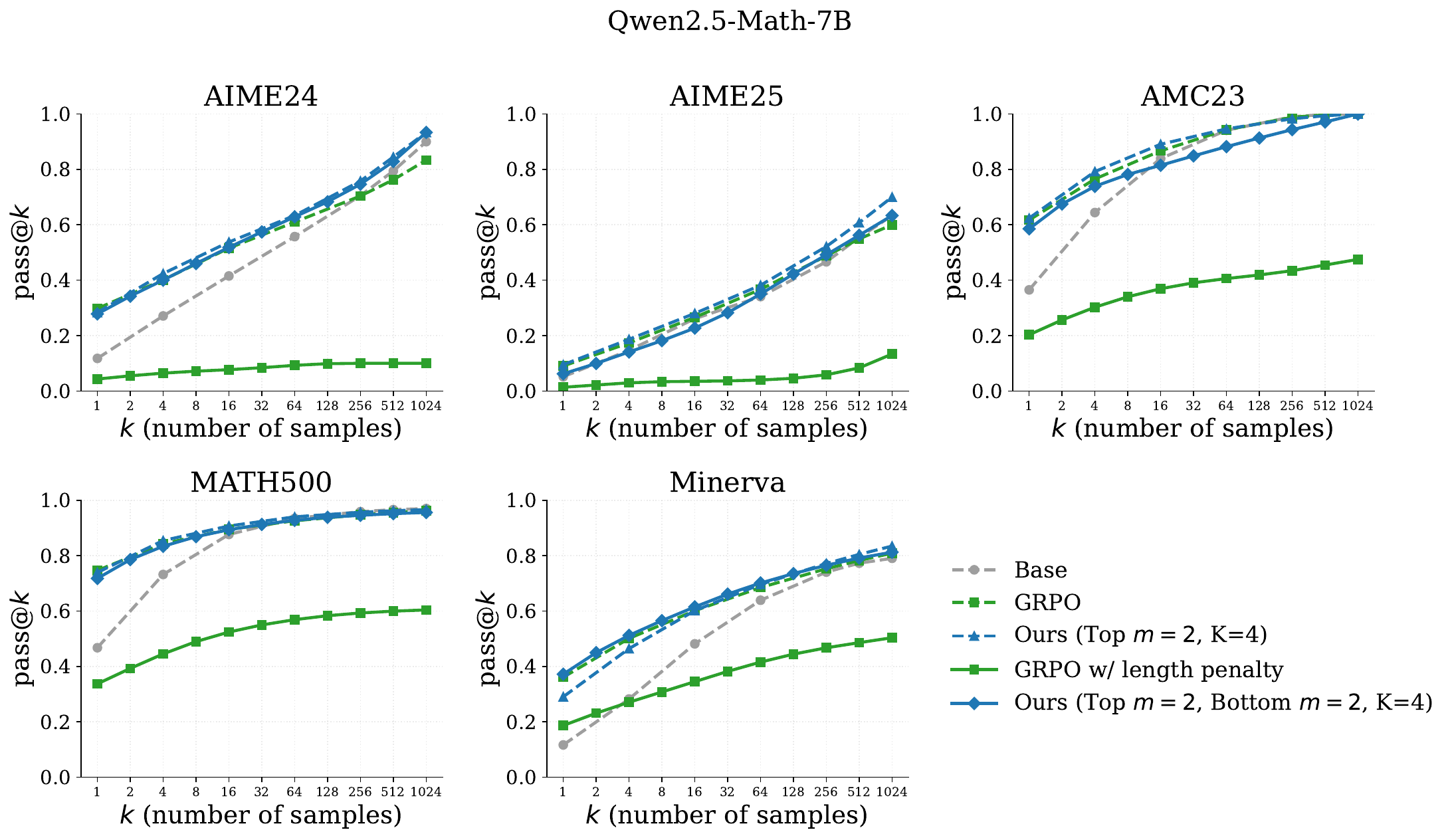}
  \caption{Per-benchmark pass@$k$ curves on Qwen2.5-Math-7B with response length penalty. Top $m=2$ by correctness reward; Bottom $m=2$ by response-length reward.}
  \label{fig:length_qwen}
\end{figure}

\newpage

\section{RL Experiments}
In this appendix, we describe an on-policy RL algorithm based on OrderGrad in the known-$(\bar R,p)$ regime explained in App.~\ref{app:known-distribution-details}, and report experimental results with MinAtar \citep{young2019minatar}.
The experiment closely follows the setup of \citet{nishimori2026emergence}.

\paragraph{PPO surrogate.}
PPO collects trajectories under the behavior policy $\pi_{\mathrm{old}}$, computes $\lambda$-returns $R^\lambda_t$, and forms advantages $A(t)=R^\lambda_t - V_\phi(s_t)$.
It then optimizes a clipped, importance-weighted surrogate with
$r_\theta(t)=\pi_\theta(a_t\mid s_t) / \pi_{\mathrm{old}}(a_t\mid s_t)$ and clipping parameter $\epsilon>0$:
\begin{equation}
\begin{aligned}
\mathcal{L}_{\scriptscriptstyle\text{PPO}}(\theta)
:=
\E{\min{\left(
r_\theta(t)A(t),\
\mathrm{clip}\big(r_\theta(t),1{-}\epsilon,1{+}\epsilon\big)A(t)
\right)}} .
\end{aligned}
\label{eq:ppo-returns}
\end{equation}
OrderGrad PPO uses the same surrogate, but replaces the standard value-based advantage with the order-statistic advantage described below.

\paragraph{OrderGrad PPO.}
We instantiate OrderGrad in a PPO-style algorithm, which we call OrderGrad PPO.
The main difference from standard PPO is that the policy advantage is computed from an order-statistic objective using a Q-critic $Q_\phi$, where $\phi$ denotes the parameters of the Q-network, rather than from the usual value-based advantage estimate used as a proxy for the return.
Specifically, for each state-action pair $(s_t,a_t)$, we first form the vector $q_t = Q_\phi(s_t,\cdot) \in \mathbb{R}^{|\mathcal A|}$ over the action set $\mathcal A$ and the policy vector $\pi_t = \pi_\theta(s_t) \in \Delta^{|\mathcal A|}$.
We then construct $\tilde q_t$ by replacing the entry of $q_t$ corresponding to the sampled action $a_t$ with the trajectory return $R^\lambda_t$, following \citet{nishimori2026emergence}, to fully use the on-policy return information.
The advantage is computed following Eq.~\ref{eq:app-adv-known} with $\tilde q_t$ and $\pi_t$ for the chosen order-statistic objective, such as Top-$M$@$K$ or Bottom-$M$@$K$.
This is a practical actor-critic instantiation rather than a direct implementation of the exact finite-action estimator: it uses a learned Q-critic, substitutes a sampled return into one Q entry, and optimizes a clipped PPO surrogate.
Algorithm~\ref{alg:ordergrad-ppo-5line} summarizes the resulting procedure.
At each iteration, we collect trajectories with the current policy, compute $\lambda$-returns, estimate order-statistic improvement scores from the Q-critic $Q_\phi$ and policy $\pi_\theta$, and use the resulting advantages in the PPO surrogate in Eq.~\ref{eq:ppo-returns}.

\begin{algorithm}[t]
  \caption{OrderGrad PPO}
  \label{alg:ordergrad-ppo-5line}
  \begin{algorithmic}[1]
    \REPEAT
      \STATE Collect trajectories under $\pi_\theta$ and compute returns $R^\lambda_t$.
      \STATE For each $(s_t,a_t)$, form $q_t \leftarrow Q_\phi(s_t,\cdot)$ and $\pi_t \leftarrow \pi_\theta(s_t)$.
      \STATE Construct $\tilde q_t$ by replacing the entry of $q_t$ corresponding to $a_t$ with $R^\lambda_t$.
      \STATE Compute the advantage $A(t)$ following Eq.~\ref{eq:app-adv-known} with $\tilde q_t$ and $\pi_t$ for the chosen order-statistic objective.
      \STATE Update the actor by the PPO surrogate in Eq.~\ref{eq:ppo-returns} with $A(t)$; update the critic $Q_\phi$ toward $R^\lambda_t$.
    \UNTIL convergence
  \end{algorithmic}
\end{algorithm}

\paragraph{Setup.}
We evaluate this algorithm on MinAtar environments implemented in pgx~\citep{young2019minatar,koyamada2023pgx}, using Asterix, Breakout, and Space Invaders, following \citet{nishimori2026emergence}.
The goal of this experiment is to test whether OrderGrad can modulate exploration behavior through its order-statistic parameters, without relying on explicit entropy regularization.
We use the Top-$M$@$K$ objective as a representative order-statistic objective of OrderGrad, fix $K=10$, and vary $M$.
We compare against PPO-V and PPO-Q, both with and without entropy regularization.
All methods are trained for $10^7$ environment steps over 10 random seeds.
PPO hyperparameters mostly follow the pgx defaults, while OrderGrad PPO uses the same setup except that the GAE parameter $\lambda$ is tuned separately.
We report the median, IQM, and mean of the normalized score following \citet{agarwal2021deep}.
The maximum scores used for normalization are taken from \citet{nishimori2026emergence}: 251.15 for Breakout, 64.95 for Asterix, and 880.91 for Space Invaders.

\paragraph{Network architecture.}
We use the same network architecture as the public implementation of PPO~\citep{koyamada2023pgx}.
For PPO-based methods, the network first maps the observation to a latent state using a shared CNN with a $2{\times}2$ convolution, ReLU activation, and average pooling, followed by an MLP.
The latent representation then branches into an actor head and a critic head.
The actor head consists of two hidden layers with ReLU and Tanh activations and outputs action logits.
For OrderGrad PPO and PPO-Q, the critic head consists of two hidden layers and outputs per-action Q-values, yielding a Q-critic $Q_\phi(s,\cdot)$.
For PPO-V, we use the corresponding scalar value critic.

\paragraph{Results.}
Figure~\ref{fig:minatar_main} shows the aggregate MinAtar performance computed with rliable~\citep{agarwal2021deep}.
In these runs, OrderGrad PPO with $M=9$ outperforms the PPO baselines in terms of normalized score.
We further analyze the effect of the parameter $M$ in Fig.~\ref{fig:minatar_m_ablation}.
The best aggregate performance is obtained around $M=9$, suggesting that the order-statistic parameter controls not only the objective but also the practical exploration--exploitation behavior of the learned policy.
This interpretation is supported by the entropy curves in Fig.~\ref{fig:minatar_entropy}, where all OrderGrad runs use entropy coefficient $0.0$.
Smaller values of $M$ slow entropy decay and maintain exploration for longer, whereas larger values of $M$ lead to faster entropy decay.
Since this variant uses critic approximation and PPO clipping, these results should be read as an empirical instantiation of the OrderGrad idea rather than a separate unbiasedness result.
Overall, the results support the flexibility of OrderGrad when combined with practical deep policy-gradient algorithms.

\begin{figure}[t]
    \centering
    \includegraphics[width=1.0\linewidth]{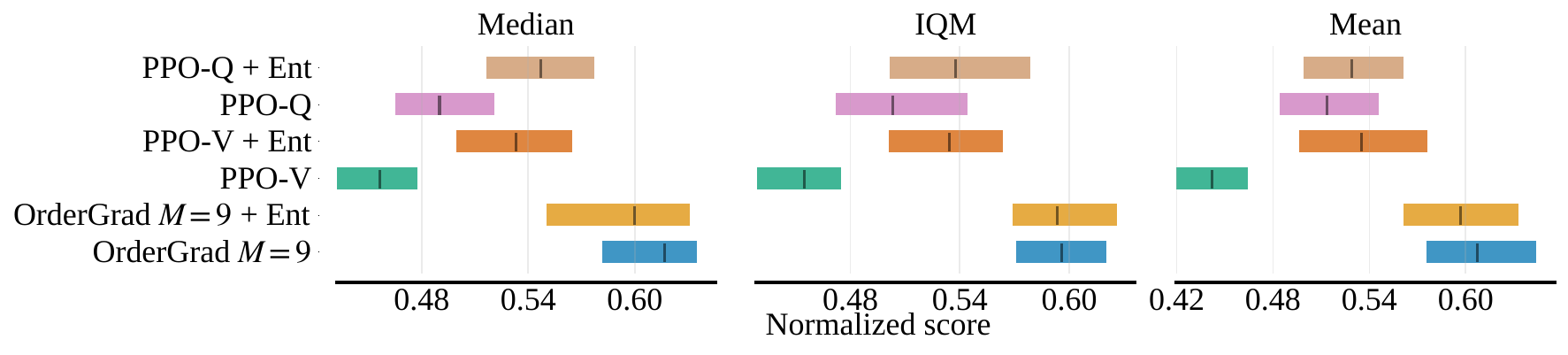}
    \caption{Aggregate MinAtar performance.
    We compare OrderGrad PPO with $M=9$ against PPO-V and PPO-Q, with and without entropy regularization.
    Scores are normalized and aggregated across environments.}
    \label{fig:minatar_main}
\end{figure}

\begin{figure}[t]
    \centering
    \includegraphics[width=0.95\linewidth]{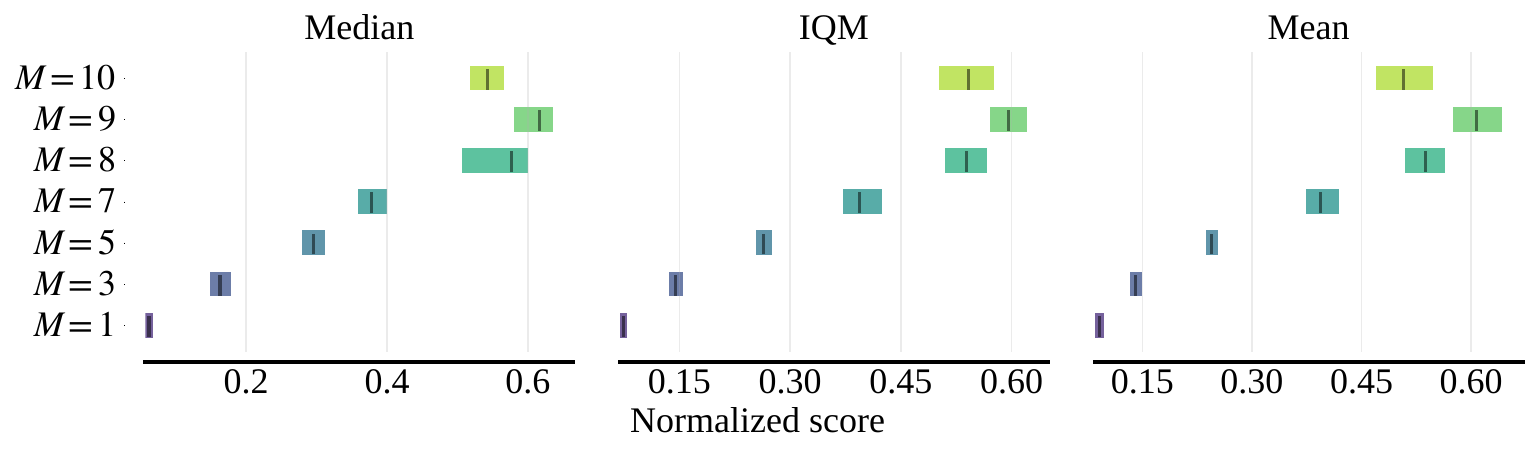}
    \caption{Effect of $M$ on MinAtar without entropy regularization.
    We report aggregate normalized evaluation return across games.
    All OrderGrad curves use entropy coefficient $0.0$.
    The best performance occurs around $M=9$.}
    \label{fig:minatar_m_ablation}
\end{figure}

\begin{figure}[t]
    \centering
    \includegraphics[width=1.0\linewidth]{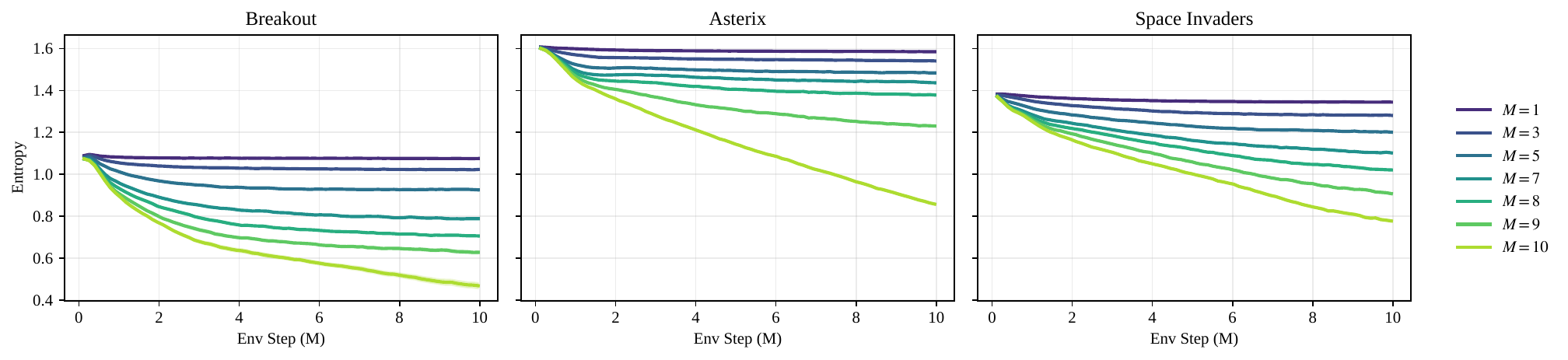}
    \caption{Policy entropy under different values of $M$ without entropy regularization.
    All curves use entropy coefficient $0.0$.
    Smaller $M$ slows entropy decay and encourages more persistent exploration,
    whereas larger $M$ leads to faster entropy decay.}
    \label{fig:minatar_entropy}
\end{figure}

\begin{table*}[t]
    \centering
    \begin{minipage}[t]{0.48\linewidth}
        \centering
        \caption{Hyperparameters for PPO-V/PPO-Q.}
        \label{tab:ppo-hparams}
        \small
        \begin{tabular}{ll}
        \toprule
        \textbf{Name (symbol)} & \textbf{Value} \\
        \midrule
        total timesteps & $1.0\times 10^{7}$ \\
        learning rate  & $3.0\times 10^{-4}$ \\
        rollout length  & 128 \\
        parallel envs  & 1024 \\
        update epochs  & 3 \\
        minibatch size  & 1024 \\
        discount $\gamma$ & 0.99 \\
        GAE $\lambda$  & 0.95 \\
        clip $\epsilon$  & 0.2 \\
        entropy coefficient  & $(0.0, 0.01)$ \\
        value loss coefficient  & 0.5 \\
        max grad norm  & 0.5 \\
        optimizer & Adam with global-norm clip \\
        \bottomrule
        \end{tabular}
    \end{minipage}
    \hfill
    \begin{minipage}[t]{0.48\linewidth}
        \centering
        \caption{Hyperparameters for OrderGrad PPO.}
        \label{tab:reppo-hparams}
        \small
        \begin{tabular}{ll}
        \toprule
        \textbf{Name (symbol)} & \textbf{Value} \\
        \midrule
        total timesteps & $1.0\times 10^{7}$ \\
        learning rate  & $3.0\times 10^{-4}$ \\
        rollout length  & 128 \\
        parallel envs  & 1024 \\
        update epochs  & 3 \\
        minibatch size  & 1024 \\
        discount $\gamma$ & 0.99 \\
        GAE $\lambda$  & 0.8 \\
        clip $\epsilon$  & 0.2 \\
        entropy coefficient  & $(0.0, 0.01)$ \\
        value loss coefficient  & 0.5 \\
        max grad norm  & 0.5 \\
        optimizer & Adam with global-norm clip \\
        top-$M$ draws $M$ & $(1,3,5,7,8,9,10)$ \\
        \bottomrule
        \end{tabular}
    \end{minipage}
\end{table*}

\end{document}